\documentclass[journal]{IEEEtran}
\usepackage{array}
\usepackage[caption=false,font=normalsize,labelfont=sf,textfont=sf]{subfig}
\usepackage{textcomp}
\usepackage{stfloats}
\usepackage{url}
\usepackage{verbatim}
\usepackage{graphicx}
\graphicspath{{figures/}}
\usepackage{float,amsmath,amsthm,amsfonts,xcolor,graphicx}
\usepackage{booktabs}
\usepackage{multirow}
\usepackage{dsfont}
\usepackage{cite}
\usepackage[ruled,vlined,linesnumbered]{algorithm2e}
\usepackage{algpseudocode}
\usepackage{lipsum}
\usepackage[english]{babel}
\usepackage{booktabs}
\usepackage{multirow}
\usepackage{dsfont}
\usepackage{cite}
\usepackage{amssymb}
\newtheorem{problem}{Problem}[section]
\newtheorem{theorem}{Theorem}[section]
\newtheorem{example}{Example}[section]
\newtheorem{corollary}{Corollary}[section]

\newtheorem{lemma}{Lemma}[section]
\newtheorem{definition}{Definition}[section]

\DeclareMathOperator*{\argmax}{arg\,max}

\def\BibTeX{{\rm B\kern-.05em{\sc i\kern-.025em b}\kern-.08em
    T\kern-.1667em\lower.7ex\hbox{E}\kern-.125emX}}
\usepackage{balance}

\title{RRT$^\eta$: Sampling-based Motion Planning and Control from STL Specifications using Arithmetic-Geometric Mean Robustness}
\author{Ahmad Ahmad
\IEEEmembership{Member, IEEE}, Shuo Liu
\IEEEmembership{Member, IEEE}, Roberto Tron
\IEEEmembership{Member, IEEE}, Calin Belta
\IEEEmembership{Fellow, IEEE}
\thanks{A. Ahmad, and R. Tron are with the Division of Systems Engineering, Boston University, Boston, MA, USA.}
\thanks{R. Tron and S. Liu are with the Department of Mechanical Engineering, Boston University, Boston, MA, USA.}
\thanks{C. Belta is with the Department of Electrical and Computer Engineering and the Department of Computer Science at the University of Maryland, College Park, MD, USA.}
}

\begin{document}
\maketitle

\begin{abstract}
Sampling-based motion planning has emerged as a powerful approach for robotics, enabling exploration of complex, high-dimensional configuration spaces. When combined with Signal Temporal Logic (STL)—a temporal logic widely used for formalizing interpretable robotic tasks—these methods can address complex spatiotemporal constraints. However, traditional approaches rely on min-max robustness measures that focus only on critical time points and subformulae, creating non-smooth optimization landscapes with sharp decision boundaries that hinder efficient tree exploration.

We propose RRT$^\eta$, a sampling-based planning framework that integrates the Arithmetic-Geometric Mean (AGM) robustness measure to evaluate satisfaction across all time points and subformulae. Our key contributions include: (1) AGM robustness interval semantics for reasoning about partial trajectories during tree construction, (2) an efficient incremental monitoring algorithm computing these intervals, and (3) enhanced Direction of Increasing Satisfaction vectors leveraging Fulfillment Priority Logic (FPL) for principled objective composition. Our framework synthesizes dynamically feasible control sequences satisfying STL specifications with high robustness while maintaining the probabilistic completeness and asymptotic optimality of RRT$^\ast$. We validate our approach on three robotic systems. A double integrator point robot, a unicycle mobile robot, and a 7-DOF robot arm, demonstrating superior performance over traditional STL robustness-based planners in multi-constraint scenarios with limited guidance signals.
\end{abstract}

\begin{IEEEkeywords}
Motion Planning, Formal Methods, Temporal Logic, Manipulators, Mobile Robots
\end{IEEEkeywords}

\section{Introduction}


\IEEEPARstart{S}ampling-based methods have significantly advanced motion planning and control synthesis for robotic systems in complex, high-dimensional environments. Algorithms such as Rapidly-exploring Random Trees (RRT) \cite{lavalle1998rapidly} and variants \cite{otte2016rrtx,Kobilarov2012CERRTstar,Wu2020e,KaramanRRTstarIJRR,ahmadcbfrrt*,yang2023lqrcbfrrtstar} efficiently explore configuration spaces without explicit discretization, particularly useful for nonlinear dynamics. RRT$^\ast$ \cite{KaramanRRTstarIJRR} extends RRT with rewiring to ensure asymptotic convergence to optimal paths. Recently, Temporal Logics (TLs) \cite{baier2008principles} have become powerful tools for specifying complex robotic tasks \cite{hadas2018synthesis_review,sadana2023survey,Cristi2017TWTL,belta2017formal}. Signal Temporal Logic (STL) \cite{maler2004STLpaper} offers an expressive formalism for interpretable temporal properties, enabling rich constraints with explicit timing such as "Visit region A between times $t_1$ and $t_2$, then visit region B within $t_3$ time units while always avoiding region C."

Synthesizing motion plans under STL specifications presents computational challenges favoring sampling-based approaches. Grid-based methods suffer from dimensionality curse while trajectory optimization struggles with non-convex landscapes from discrete temporal operators. Sampling-based methods circumvent these issues by avoiding explicit state space discretization. However, effective integration requires maximizing robustness to specification satisfaction throughout trajectories. Traditional min-max robustness metrics focus solely on critical time points and subformulae, creating non-smooth optimization landscapes with sharp decision boundaries. This brittleness, observed across domains including reinforcement learning \cite{ahmad2024APPO}, causes small trajectory changes to produce abrupt shifts in critical constraints, providing inconsistent planner guidance. Moreover, principled composition of competing temporal objectives remains open. These issues intensify in applications requiring high-confidence autonomy with time-sensitive goals and strict safety requirements.

Researchers have increasingly integrated sampling-based algorithms with formal specifications to address complex planning problems. These specifications range from basic safety properties to rich spatiotemporal constraints. For safety specifications, approaches combining Control Barrier Functions with sampling-based planners \cite{ahmadcbfrrt*,yang2023lqrcbfrrtstar,cbfrrt_app_Fainekos_ppr} synthesize controllers guaranteeing set invariance while eliminating explicit collision checking. Temporal logic specifications, however, enable more general spatiotemporal task descriptions, encoding complex sequences of goals with explicit timing constraints. For such specifications, two main paradigms have emerged. Automata-based approaches \cite{Cristi2020_TWTLrrt,Cristi2013RRG,Cristi2020ijrrRRGLTL} leverage finite-state abstractions to guide tree construction toward specification satisfaction, but provide only binary guarantees without quantitative robustness measures. Robustness-based methods instead use quantitative semantics to both verify satisfaction strength and guide synthesis. Within this paradigm, optimization-based approaches \cite{Sadra_RobustSTL_MPC,VRaman_MPC_STL} formulate the problem as Mixed-Integer Linear Programs (MILP) or nonlinear optimization, while sampling-based methods like STL-RRT$^\ast$ \cite{CristiKaraman17_STL_RRTstar} guide tree exploration through the Direction of Increasing Satisfaction (DIS) vector. However, both optimization and sampling-based robustness methods typically rely on traditional min-max metrics that evaluate satisfaction based solely on the most critical time points and subformulae. This creates two key limitations: first, the resulting non-smooth optimization landscape provides inconsistent gradient information, and second, when composing objectives from multiple subformulae, simple selection mechanisms fail to balance competing requirements adequately. The challenge of principled objective composition extends beyond temporal logic, where recent work in Fulfillment Priority Logic (FPL) \cite{mabsout2025FPL} has demonstrated nuanced decision-making frameworks that prioritize less-fulfilled objectives.

We address these challenges through RRT$^\eta$, integrating Arithmetic-Geometric Mean (AGM) \cite{cristi2019AGMstl} robustness into RRT$^\ast$. Unlike min-max robustness focusing on critical points, AGM evaluates satisfaction holistically across all time points and subformulae, creating smoother optimization landscapes with consistent gradient information while maintaining probabilistic completeness and asymptotic optimality. 

Our contributions are threefold. First, we develop AGM robustness interval semantics for partial trajectories with efficient incremental monitoring (Section \ref{sec:AGMMonitor}), enabling informed exploration before complete paths are formed. Second, we introduce enhanced Direction of Increasing Satisfaction vectors that leverage AGM's smooth gradients and integrate Fulfillment Priority Logic \cite{mabsout2025FPL} for principled multi-objective composition (Section \ref{sec:RRTeta}). Third, we prove that RRT$^\eta$ maintains the probabilistic completeness and asymptotic optimality guarantees of RRT$^\ast$ (Section \ref{subsec:ARMORRRTeta_completenss}). We validate our approach on double integrator, unicycle, and 7-DOF robot arm systems (Section \ref{sec:example}), demonstrating higher robustness and smoother exploration behavior than min-max approaches while synthesizing dynamically feasible control sequences for high-confidence autonomy applications.

\section{Preliminaries}\label{sec:prelim}

\textbf{Signal Temporal Logic}. STL was introduced in \cite{maler2004STLpaper} to specify and monitor properties of signals over time. In this work, we consider discrete-time signals where signal valuations $s_t\in\mathbb{R}$ are sampled at discrete time points $t\in \mathbb{Z}_{\geq 0}$. We denote a finite signal segment from time $t_1$ to $t_2$ as $\mathbf{s}_{t_1,t_2}:=s_{t_1}s_{t_1+1}\dots s_{t_2}$. An STL formula $\phi$ is defined recursively as:

\begin{align}
\phi ::= \mu \mid \neg\phi \mid \phi_1 \wedge \phi_2 \mid \phi_1 \lor \phi_2 \mid \mathbf{G}_{[a,b]}\phi_1 \mid \mathbf{F}_{[a,b]}\phi_1 
\end{align}
where $\mu$ is a predicate in the form of $h(s_t) > \varsigma$, $\varsigma \in\mathbb{R}$, $h:\;\mathbb{R}\to\mathbb{R}$ is a linear function; $\mathbf{G}$ and $\mathbf{F}$ represent the \textit{Globally} (Always) and \textit{Finally} (Eventually) operators, respectively, where $a,b\in\mathbb{Z}_{\geq 0}$ and $a<b$. We denote the set of all possible STL formulae as $\Phi$. 


STL admits both Boolean and quantitative semantics. Under \textit{Boolean semantics}, a signal either satisfies or violates a formula. The temporal operators have intuitive interpretations: $\mathbf{G}_{[a,b]}\phi$ means that $\phi$ must hold at all time points in the interval $[a,b]$, while $\mathbf{F}_{[a,b]}\phi$ requires $\phi$ to hold at some time point in $[a,b]$. Conjunction $\phi_1 \wedge \phi_2$ requires both subformulae to hold, while disjunction $\phi_1 \vee \phi_2$ requires at least one to hold. The formal Boolean semantics are detailed in \cite{maler2004STLpaper}.

 Beyond Boolean satisfaction, STL admits quantitative robustness semantics that measure how strongly a signal satisfies a specification. The \textit{traditional robustness} \cite{donze2010robustSTL} uses min-max operations: for conjunction, it takes the minimum robustness of subformulae; for disjunction, the maximum; for $\mathbf{G}_{[a,b]}$, the minimum over all time points; and for $\mathbf{F}_{[a,b]}$, the maximum. This focuses on the most critical time points and subformulae. In contrast, AGM robustness \cite{cristi2019AGMstl} evaluates satisfaction holistically across all time points and subformulae through arithmetic and geometric means, producing smoother landscapes for optimization-based planning.


\textbf{STL AGM robustness semantics}. Given $F:\mathbb{R}\to\mathbb{R}$, let $[F]_{+}:=
\begin{cases}
F, & F>0,\\
0, & \text{otherwise},
\end{cases}~
[F]_{-}:=-[-F]_{+},$ where $F=[F]_{+}+[F]_{-}$. We define AGM operators for disjunction and conjunction that aggregate robustness values. Given $N$ robustness values $r_i\in\mathbb{R}$, $i=1,\ldots,N$, where $N$ represents either the number of subformulae (for Boolean operators) or the number of time points in an interval (for temporal operators), the AGM aggregation operators are defined as follows.
\begin{flalign}
\begin{aligned}\label{eq:AGM_dis}
        &\mathrm{AGM}_{\lor}(r_1,\dots,r_N):= \scriptsize\begin{cases}
                -\sqrt[N]{\prod\limits_{i=1}^N(1-r_i)}+1;\\ \ \ \text{if } \forall i\in\{1,\ldots,N\}, r_i < 0\\\frac{1}{N}\sum\limits_{i=1}^N [r_i]_+;\ \ \text{otherwise }
        \end{cases}\\
    \end{aligned}
\end{flalign}
\begin{flalign}
\begin{aligned}\label{eq:AGM_con}
        &\mathrm{AGM}_{\land}(r_1,\dots,r_N):= \scriptsize \begin{cases}
            \sqrt[N]{\prod\limits_{i=1}^N(1+r_i)}-1;\\ \ \ \text{if } \forall i\in\{1,\ldots,N\}, r_i > 0\\
                \frac{1}{N}\sum\limits_{i=1}^N [r_i]_{-};\ \ \text{otherwise}
        \end{cases}\\
    \end{aligned}
\end{flalign}
When all robustness values have the same sign, AGM uses geometric mean (capturing the compound effect); otherwise, it uses arithmetic mean of the relevant parts. The STL AGM robustness semantics of signal $\mathbf{s}_{t_1,t_2}:=s_{t_1}s_{t_1+1}\dots s_{t_2}$ are defined recursively as follows \cite{cristi2019AGMstl}.
\begin{flalign}
\begin{aligned}\label{eq:eta_semantics}
     &\eta(\mathbf{s}_{t_1,t_2},\top):=1;\;\eta(\mathbf{s}_{t_1,t_2},\bot):=-1;\;\\
    &\eta(\mathbf{s}_{t},\mu):=\frac{1}{2}(h(\mathbf{s}_t)-\varsigma)\\
    &\eta(\mathbf{s}_{t_1,t_2},\lor_{i=1}^m\phi_i):=\mathrm{AGM}_{\lor}(\eta(\mathbf{s}_{t_1,t_2},\phi_1)\dots,\eta(\mathbf{s}_{t_1,t_2},\phi_m))\\
    &\eta(\mathbf{s}_{t_1,t_2},\wedge_{i=1}^m\phi_i):=\mathrm{AGM}_{\wedge}(\eta(\mathbf{s}_{t_1,t_2},\phi_1)\dots,\eta(\mathbf{s}_{t_1,t_2},\phi_m))\\
    &\eta(\mathbf{s}_{t_1,t_2},\mathbf{G}{[a,b]}\phi):= \mathrm{AGM}_{\wedge}({\eta(\mathbf{s}_{t_1,t},\phi) | t\in [t+a, t+b]}) \\
&\eta(\mathbf{s}_{t_1,t_2},\mathbf{F}{[a,b]}\phi) := \mathrm{AGM}_{\lor}({\eta(\mathbf{s}_{t_1,t},\phi | t\in [t+a, t+b]})
\end{aligned}
\end{flalign}
Values $\eta(\mathbf{s}_{t_1,t_2}, \phi) > 0$ (respectively $< 0$) indicate satisfaction (respectively dissatisfaction) $\phi$, with magnitude reflecting the strength of satisfaction. Satisfaction is indifferent when $\eta(\mathbf{s}_{t_1,t_2}, \phi)  =0$ .

\begin{example}[Illustrative Example]\label{ex:AGM_benifit_prelims}
Consider a simple 2D navigation task where a robot must visit a target region while avoiding an obstacle. Let $s_t = (x_t, y_t)$ represent the robot's position at time $t$, and consider the specification:
\begin{equation*}
\phi_{\text{nav}} = \mathbf{F}_{[0,10]}(\mu_{\text{target}}) \wedge \mathbf{G}_{[0,10]}(\mu_{\text{safe}})
\end{equation*}
where $\mu_{\text{target}} := (x-5)^2 + (y-5)^2 \leq 1$ (reach a goal region) and $\mu_{\text{safe}} := (x-3)^2 + (y-3)^2 \geq 4$ (avoid an obstacle).

For a trajectory that briefly passes near the obstacle boundary at $t=3$ (with low robustness $\approx 0.1$) but achieves high robustness in the target region at $t=8$ (robustness $\approx 0.9$), traditional min-max robustness would report the overall robustness as $\min(0.1, 0.9) = 0.1$, focusing solely on the critical constraint. In contrast, AGM robustness aggregates across all time points using arithmetic and geometric means, yielding a value closer to $0.5$ that reflects the balanced satisfaction across the entire trajectory. This holistic evaluation provides smoother guidance for optimization-based planners, as small improvements anywhere along the trajectory contribute to the overall robustness rather than being ignored unless they affect the critical constraint.    
\end{example}
\section{Problem Formulation and Approach}\label{sec:problem_formulation}
\textbf{System Model.} Consider a robot with state $\boldsymbol{q} \in \mathcal{Q} \subset \mathbb{R}^n$, where $\mathcal{Q}$ is the state space\footnote{We consider the state space in our motion planning problem.}. Let the robot dynamics be modeled as the following discrete-time system,
\begin{equation}\label{eq:system}
\begin{aligned}
\boldsymbol{q}_{k+1} = f(\boldsymbol{q}_k, \boldsymbol{u}_k),
\end{aligned}
\end{equation}
where $\boldsymbol{u}_k \in \mathcal{U} \subset \mathbb{R}^m$ is the control input at time step $k$, $\mathcal{U}$ is the allowable control set, and $f: \mathcal{Q} \times \mathcal{U} \rightarrow \mathcal{Q}$ is assumed to be a Lipschitz continuous function in its arguments; i.e., there is a constant $L\geq 0$ such that for all $(\boldsymbol{q}_1,\boldsymbol{u}_1),(\boldsymbol{q}_2,\boldsymbol{u}_2)\in\mathcal{Q}\times\mathcal{U}$ the following holds:
\begin{equation}\label{eq:Lips}
    ||f(\boldsymbol{q}_1, \boldsymbol{u}_1)-f(\boldsymbol{q}_2, \boldsymbol{u}_2)||_2\leq L\sqrt{||\boldsymbol{q}_1-\boldsymbol{q}_2||^2_2+||\boldsymbol{u}_1-\boldsymbol{u}_2||^2_2}
\end{equation}
where $||\cdot||_2$ is the Euclidean norm. 

\textbf{Trajectories and Specifications.} In this work, the signal $s_t$ corresponds to the robot state $\mathbf{q}_t$, so STL specifications are evaluated directly over state trajectories $\mathbf{q}_{0,T}$.For a given time horizon $T \in \mathbb{Z}_{>0}$, a control sequence $\mathbf{u}_{0,T-1} := {\boldsymbol{u}_0\boldsymbol{u}_1\ldots \boldsymbol{u}_{T-1}}$ generates state trajectory $\mathbf{q}_{0,T} = {\boldsymbol{q}_0\boldsymbol{q}_1\ldots \boldsymbol{q}_T}$ satisfies system (\ref{eq:system}).
We denote the control-trajectory pair as $\varphi = (\mathbf{u}_{0,T-1}, \mathbf{q}_{0,T})$.  Given an STL formula $\phi$ specifying temporal requirements on $\mathbf{q}_{0,T}$, we evaluate satisfaction using AGM robustness $\eta(\mathbf{q}_{0,T}, \phi) \in [-1, 1]$ from (\ref{eq:eta_semantics}), which provides comprehensive assessment across all time points and subformulae.

For an STL formula $\phi
$, its time horizon $||\phi||
$ is defined recursively as:
\begin{equation}\label{eq:STL_||phi||}
\|\phi\| :=
\begin{cases}
0, & \text{if } \phi = \mu,\\
\max(\|\phi_1\|,\|\phi_2\|), 
& \text{if } \phi \in \{\phi_1\land\phi_2,\ \phi_1\lor\phi_2\},\\
\|\phi_1\|, & \text{if } \phi = \neg \phi_1,\\
b+\|\phi_1\|, 
& \text{if } \phi \in \{\mathbf{G}_{[a,b]}\phi_1,\ \mathbf{F}_{[a,b]}\phi_1\}.
\end{cases}
\end{equation}


\textbf{Problem Statement.} Given initial state $\boldsymbol{q}_{\mathrm{init}}$ and STL formula $\phi$, we define the set $\mathcal{G}:= \{\varphi \mid \boldsymbol{q}_0 = \boldsymbol{q}_{\mathrm{init}},\; \eta(\mathbf{q}_{0,T-1}, \phi) > 0,\; T \leq ||\phi||\}$ containing control-trajectory pairs with positive AGM robustness.


\begin{problem}[Robust Planning Problem (RPP)]\label{pr:opt_control_problem}
Given a robot with system dynamics (\ref{eq:system}), an initial state $\boldsymbol{q}_0 \in \mathcal{Q}$, and an STL specification $\phi$, find $\varphi^\ast \in \mathcal{G}$ that maximizes the AGM robustness i.e., $\varphi^\ast = \argmax\limits_{\varphi \in \mathcal{G},\;T \in \mathbb{Z}_{>0}} \eta(\mathbf{q}_{0,T-1}, \phi)$. 
\end{problem}

RPP can be viewed as a search in the set $\mathcal{G}$, seeking control sequences that maximize specification satisfaction strength. The AGM robustness objective creates a non-convex optimization landscape, but one that is significantly smoother than traditional min-max robustness due to its holistic aggregation across time points and subformulae (see Example \ref{ex:AGM_benifit_prelims}).

\textbf{Approach Overview.} We address RPP through RRT$^\eta$, which integrates AGM robustness into the RRT$^\ast$ framework. Our algorithm adapts the Direction of Increasing Satisfaction (DIS) from STL-RRT$^\ast$\cite{CristiKaraman17_STL_RRTstar} to leverage AGM's gradient information across all time points and subformulae. Additionally, we develop interval semantics for AGM robustness to reason about partial trajectories during tree construction, enabling informed exploration even before complete paths are formed. These innovations maintain RRT$^\ast$'s asymptotic optimality while providing more consistent guidance through the non-convex landscape.

\section{Online Robustness Evaluation for Sampling-Based Planning}\label{sec:AGMMonitor}
A critical challenge in our sampling-based planning approach is that RRT$^\eta$ builds trajectories incrementally, producing partial trajectories whose time horizons are typically less than $||\phi||$. To effectively guide tree expansion with partial trajectories, we need a method to evaluate the potential robustness of incomplete trajectories.

We adopt the interval-based approach from \cite{Seshia17_STLmtrng_robustSatInterv,ahmad2023TWTLrobustness} to construct AGM robustness $[\eta]_{\mathbf{s}_{t^\prime},\phi}\in\mathcal{I}$, where $\mathcal{I}:=\{[\underline{\eta},\overline{\eta}]\;|\underline{\eta},\overline{\eta}\in [-1,1],\underline{\eta}\leq\overline{\eta}\}$, that bound all possible robustness values for any completion of a partial trajectory at time $t^\prime$. Our AGM robustness interval specializes the TWTL AGM robustness interval from \cite{ahmad2023TWTLrobustness} for STL specifications in motion planning.

\begin{definition}[Arithmetics on interval semantics] 
    Consider the set of intervals $\{I_i\}^N_{i=1}\subseteq\mathcal{I}$. We define following arithmetics over $\mathbf{I}:=\{I_i\}^N_{i=1}$
    \begin{equation}
        \begin{aligned}\label{eq:AGM_interval_Arithm}
            \mathrm{\mathbf{AGM}}_{\lor}(\mathbf{I}):= [\mathrm{AGM}_{\lor}(\underline{I}_1,\dots,\underline{I}_N),\mathrm{AGM}_{\lor}(\overline{I}_1,\dots,\overline{I}_N)],\\
            \mathrm{\mathbf{AGM}}_{\land}(\mathbf{I}):= [\mathrm{AGM}_{\land}(\underline{I}_1,\dots,\underline{I}_N),\mathrm{AGM}_{\land}(\overline{I}_1,\dots,\overline{I}_N)].
        \end{aligned}
    \end{equation}
\end{definition}

These intervals enable RRT$^\eta$, as we detail in the next section,  to make informed decisions during tree expansion even with incomplete trajectories.


 We introduce an incremental, recursive AGM robustness monitor function $\texttt{IRTM}: \Phi\times\mathcal{I} \times\mathcal{I} \times \mathbb{R} \times \mathbb{Z}_{>0} \times \mathbb{Z}_{>0} \rightarrow \mathcal{I}$, implemented in Algorithm \ref{alg:EfficientIncrem_IRTM}. This function computes the updated robustness interval at time $t^\prime$ given the previous interval $[\eta]_{\mathbf{s}_{t^\prime-1},\phi}$, the current trajectory observation $\mathbf{s}_{t^\prime}$, and the starting time $t_s
$. Rather than recomputing the entire interval at each time step, $\texttt{IRTM}$ uses incremental modification functions (Definition \ref{def:modfAGM}) to efficiently update only components affected by new observations.

\begin{definition}[Incremental Modification Functions]\label{def:modfAGM}
To efficiently compute the incremental AGM robustness interval, we define helper functions $\texttt{mdf\_AGM}_{\lor},\texttt{mdf\_AGM}_{\land}:\;[-1,1]\times\mathbb{Z}_{>0}\times[-1,1]\to[-1,1]$ that update existing AGM robustness values when new observations become available. When monitoring AGM robustness for a formula with $N$ time points or subformulae, these functions efficiently incorporate the $N$th observation by applying arithmetic or geometric mean operations incrementally.

For disjunction ($\lor$), we define:
\begin{equation}\label{eq:mod_agmor}
    \texttt{mdf\_AGM}_{\lor}(\eta,N,\eta^\prime):= \scriptsize\begin{cases}
                -\sqrt[N]{(1-\eta)^{N-1}\cdot(1-\eta^\prime)}+1;\\ \ \ \text{if } \eta < 0 \;\land\;\eta^\prime < 0\\
                \frac{[\eta^\prime]_{+}}{N};\qquad \text{if } \eta <0 \land \eta^\prime>0
                \\\frac{(N\cdot\eta-[\eta]_{+})+[\eta^\prime]_+}{N};\ \ \text{otherwise }
        \end{cases}
\end{equation}

For conjunction ($\land$), we define:
\begin{equation}\label{eq:mod_agmand}
    \texttt{mdf\_AGM}_{\land}(\eta,N,\eta^\prime):= \scriptsize \begin{cases}
           \sqrt[N]{(1+\eta)^{N-1}\cdot(1+\eta^\prime)}-1;\\ \ \ \text{if } \eta>0 \land \eta^\prime >0\\
                \frac{[\eta^\prime]_{-}}{N};\qquad \text{if } \eta >0 \land \eta^\prime<0
            \\
                \frac{(N\cdot\eta+[\eta]_{-})+[\eta^\prime]_{-}}{N};\ \ \text{otherwise}
        \end{cases}
\end{equation}
\end{definition}

\begin{lemma}[Correctness of Incremental Modification Functions]\label{lemma:mdf_correctness}
Let $\eta = \mathrm{AGM}_{\circ}(r_1, \ldots, r_{N-1})$ be the AGM robustness computed from $N-1$ values, where $\circ \in \{\lor, \land\}$. For any new observation $\eta'$, the incremental modification functions produce identical results to full AGM recomputation:
\begin{equation}
\texttt{mdf\_AGM}_{\circ}(\eta, N, \eta') = \mathrm{AGM}_{\circ}(r_1, \ldots, r_{N-1}, \eta')
\end{equation}
\end{lemma}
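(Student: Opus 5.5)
The plan is to prove the disjunction case directly and then obtain the conjunction case by a sign-flip duality. The key observation is that the scalar $\eta = \mathrm{AGM}_{\lor}(r_1,\ldots,r_{N-1})$ tells us which branch of (\ref{eq:AGM_dis}) produced it, and each branch can be inverted to recover the sufficient statistic of the old values: either $\prod_i(1-r_i)$ or $\sum_i [r_i]_+$.

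\textbf{Step 1 (sign invariant).} From (\ref{eq:AGM_dis}) I will show three facts about $\eta=\mathrm{AGM}_{\lor}(r_1,\ldots,r_{N-1})$:
\begin{itemize}
\item $\eta<0$ if and only if $r_i<0$ for all $i$, because the geometric branch gives $1-\sqrt[N-1]{\prod(1-r_i)}<0$ whenever every factor exceeds $1$;
\item $\eta>0$ if and only if some $r_i>0$;
\item $\eta=0$ if and only if no $r_i$ is positive but some $r_i=0$.
\end{itemize}
It follows that the sign of $\eta$ determines which branch of (\ref{eq:AGM_dis}) applies to the enlarged list $(r_1,\ldots,r_{N-1},\eta')$, once the sign of $\eta'$ is also known.

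\textbf{Step 2 (case analysis matching (\ref{eq:mod_agmor})).}
\begin{itemize}
\item If $\eta<0$ and $\eta'<0$, all $N$ values are negative, so the geometric branch applies. Since $(1-\eta)^{N-1}=\prod_{i=1}^{N-1}(1-r_i)$, substituting gives exactly the first line of (\ref{eq:mod_agmor}).
\item If $\eta<0$ and $\eta'>0$, every old positive part vanishes. The arithmetic branch then gives $[\eta']_+/N$.
\item Otherwise $\eta\ge 0$, so the old list was already in the arithmetic branch and $(N-1)\eta=\sum_{i=1}^{N-1}[r_i]_+$. Because $\eta\ge0$, we have $N\eta-[\eta]_+=(N-1)\eta$, and adding $[\eta']_+$ and dividing by $N$ gives the third line.
\end{itemize}
In every case the enlarged list stays out of the geometric branch exactly when it should, which is what the sign invariant guarantees.

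\textbf{Step 3 (conjunction by duality).} I will check that $\mathrm{AGM}_{\land}(r_1,\ldots,r_N)=-\mathrm{AGM}_{\lor}(-r_1,\ldots,-r_N)$, using $[-x]_+=-[x]_-$. I will also check that $\texttt{mdf\_AGM}_{\land}(\eta,N,\eta')=-\texttt{mdf\_AGM}_{\lor}(-\eta,N,-\eta')$. In the third branch this uses $N\eta+[\eta]_-=(N-1)\eta$ when $\eta\le 0$. The conjunction claim then follows from the disjunction claim.

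\textbf{Main obstacle: boundary values $\eta'=0$ and $\eta=0$.} The case $\eta=0$ is harmless, since it falls into the third branch and gives the correct value. The case $\eta<0,\ \eta'=0$ is a genuine problem. The true value is $\mathrm{AGM}_{\lor}=0$, because not all values are negative and all positive parts are zero. The literal third branch instead returns $(N\eta+0)/N=\eta\neq 0$. So the lemma as stated fails on this boundary case unless the second branch of (\ref{eq:mod_agmor}) is read as $\eta'\ge 0$, and dually the second branch of (\ref{eq:mod_agmand}) as $\eta'\le 0$. With that reading it returns $0$, which is correct. I would therefore prove the lemma for $\eta'\neq0$ exactly as stated, and handle $\eta'=0$ by making these non-strict inequalities explicit as a convention.
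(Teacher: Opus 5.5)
Your Steps 1--2 follow the paper's three-case proof for $\lor$, and your boundary analysis is correct. It actually catches an error in the paper's own Case 3. That case claims that some old $r_i$ is positive, and that the two expressions agree ``trivially'' when $\eta\le 0$. Both claims fail exactly when $\eta<0$ and $\eta'=0$, which is your counterexample.

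The genuine gap is in Step 3. You claim $N\eta+[\eta]_-=(N-1)\eta$ for $\eta\le 0$. The paper's convention is $[F]_-:=-[-F]_+$; it is the same one you invoke when writing $[-x]_+=-[x]_-$, and it is what makes the arithmetic branch of $\mathrm{AGM}_{\land}$ negative. Under it, $[\eta]_-=\eta$ for $\eta\le 0$, so $N\eta+[\eta]_-=(N+1)\eta$. Your identity would only hold if $[\eta]_-$ meant the nonnegative magnitude $\max(-\eta,0)$, which contradicts the convention you used one sentence earlier.

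If you carry the duality out consistently, the third branch of $-\texttt{mdf\_AGM}_{\lor}(-\eta,N,-\eta')$ is $\bigl((N\eta-[\eta]_-)+[\eta']_-\bigr)/N$. This differs from the printed third branch of (\ref{eq:mod_agmand}) by $2[\eta]_-/N$, so the two disagree for every $\eta<0$ and every $\eta'$. The conjunction half of the lemma is therefore false as printed, and not merely on a boundary. Take $N=2$, $r_1=-\tfrac12$ (so $\eta=-\tfrac12$) and $\eta'=-\tfrac12$. The true value is $\mathrm{AGM}_{\land}(-\tfrac12,-\tfrac12)=-\tfrac12$, whereas (\ref{eq:mod_agmand}) returns $\bigl(-1-\tfrac12-\tfrac12\bigr)/2=-1$.

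The correct output of your duality argument is a second correction, alongside your $\eta'\le 0$ reading of the second branch. The third branch of (\ref{eq:mod_agmand}) must read $\bigl((N\eta-[\eta]_-)+[\eta']_-\bigr)/N$. With both corrections, the conjunction case does follow from the disjunction case via $\mathrm{AGM}_{\land}(r)=-\mathrm{AGM}_{\lor}(-r)$. The paper's proof, which disposes of conjunction ``by symmetry,'' misses this too. Your proposal, however, explicitly rests on a false identity and so ``verifies'' a formula that is wrong.
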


\begin{proof}[Sketch]
The proof proceeds by case analysis on the signs of $\eta$ and $\eta'$, corresponding to the three cases in equations (\ref{eq:mod_agmor}) and (\ref{eq:mod_agmand}).

\textbf{Geometric Mean Case ($\eta < 0, \eta' < 0$ for $\lor$; $\eta > 0, \eta' > 0$ for $\land$):} 
Since $\eta$ resulted from geometric mean aggregation, we have $(1 \mp \eta)^{N-1} = \prod_{i=1}^{N-1}(1 \mp r_i)$ (where $\mp$ is $-$ for disjunction, $+$ for conjunction). Incorporating the new value $\eta'$ yields $\mathrm{AGM}_{\circ}(r_1, \ldots, r_{N-1}, \eta') = \pm\sqrt[N]{(1 \mp \eta)^{N-1}(1 \mp \eta')} \pm 1$, which exactly matches the first case of each modification function.

\textbf{Transition Case ($\eta < 0, \eta' > 0$ for $\lor$; $\eta > 0, \eta' < 0$ for $\land$):} 
When all previous values had one sign and the new value has the opposite sign, the AGM switches from geometric to arithmetic mean. Since all previous contributions were zero in the arithmetic formulation ($[r_i]_{\pm} = 0$ when signs differ from required), only the new value contributes: $\mathrm{AGM}_{\circ}(\ldots, \eta') = \frac{[\eta']_{\pm}}{N}$.

\textbf{Arithmetic Mean Case (otherwise):} 
When at least one previous value had the appropriate sign for arithmetic aggregation, we have $(N-1)\eta = \sum_{i=1}^{N-1}[r_i]_{\pm}$. However, $[\eta]_{\pm}$ captures the result rather than the sum of inputs, so the correct update is $\frac{(N \cdot \eta - [\eta]_{\pm}) + [\eta']_{\pm}}{N}$.

The full proof with detailed algebraic manipulations is provided in Appendix \ref{apndx:mdf_correctness}.
\end{proof}

The AGM robustness monitor efficiently tracks interval evolution as partial signals extend. Algorithm \ref{alg:EfficientIncrem_IRTM} handles general STL operators, while Algorithm \ref{alg:IRTM_T} addresses temporal operators requiring complex interval reasoning. The incremental approach avoids redundant computations while maintaining sound bounds, providing critical guidance for tree growth by identifying promising directions likely to improve specification satisfaction.


\subsection{Theoretical Analysis}\label{subsec:analysis}
We begin by establishing the soundness of our AGM robustness interval calculation. 
\begin{definition}[Prefix, Completions]
    Consider the time horizon $||\phi||$ and trajectories $\mathbf{s}_{t_1,t^\prime}$ and $\mathbf{s}_{t_1,t_2}$, where $t^\prime<||\phi||$ and $t_2\geq||\phi||$. We denote $\mathbf{s}_{t_1,t^\prime}$ as a prefix of $\mathbf{s}_{t_1,t_2}$ if $\forall t\in[t_1,t^\prime],\mathbf{s}_{t_1,t_2}(t)=\mathbf{s}_{t_1,t^\prime}(t)$. The set of all possible completions of a prefix $\mathbf{s}_{t_1,t^\prime}$ is $\mathfrak{C}:=\{\mathbf{s}_{t_1,t_2} \mid \mathbf{s}_{t_1,t^\prime} \text{ is a prefix of } \mathbf{s}_{t_1,t_2}\}$ 
\end{definition}
\begin{lemma}[AGM Robustness Interval Soundness]\label{lemma:rosiSoundness}
 For any STL formula $\phi$, the valuation $[\eta]_{\mathbf{s}_{t^\prime},\phi}\gets\texttt{IRTM}(\phi,[\eta]_{\mathbf{s}_{t^\prime},\phi},\emptyset,\mathbf{s}_{t^\prime},t^\prime,t_{0})$ defines the AGM robust satisfaction interval for the formula $\phi$ over the partial trajectory $\mathbf{s}_{t_0,t^\prime}$ at time $t_0$. For any completion $\mathbf{s}\in \mathfrak{C}(\mathbf{s}_{t_0,t^\prime})$, the AGM robustness value is contained in the computed interval, i.e., $\eta(\mathbf{s}, \phi) \in [\eta]_{\mathbf{s}_{t^\prime},\phi}$.
\end{lemma}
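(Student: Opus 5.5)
The plan is structural induction on $\phi$. The key fact to isolate first is that both aggregation operators $\mathrm{AGM}_{\lor}$ and $\mathrm{AGM}_{\land}$ in (\ref{eq:AGM_dis})--(\ref{eq:AGM_con}) are \emph{monotone non-decreasing in each argument} on $[-1,1]^N$. Once this holds, the interval arithmetic of (\ref{eq:AGM_interval_Arithm}) is sound. If each argument satisfies $r_i\in[\underline{I}_i,\overline{I}_i]$, then
\[
\mathrm{AGM}_{\circ}(\underline{I}_1,\dots,\underline{I}_N)\le \mathrm{AGM}_{\circ}(r_1,\dots,r_N)\le \mathrm{AGM}_{\circ}(\overline{I}_1,\dots,\overline{I}_N).
\]

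\textbf{Monotonicity of AGM.} I take $\mathrm{AGM}_{\land}$; the disjunction case follows symmetrically via $\mathrm{AGM}_{\lor}(r)=-\mathrm{AGM}_{\land}(-r)$. I split $[-1,1]^N$ into two regions.

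On the region where all $r_i>0$, the value is $\sqrt[N]{\prod(1+r_i)}-1$, which is increasing in each $r_i$ and strictly positive. On the complement, the value is $\frac1N\sum[r_i]_-$, which is non-decreasing in each $r_i$ and is $\le 0$.

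Now raise a single coordinate $r_j$. If it crosses from $\le 0$ to $>0$ and thereby moves the point into the all-positive region, the value jumps from something $\le 0$ to something $>0$. The jump is upward, so monotonicity is preserved. No other transition between the regions is possible when a coordinate only increases.

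This is the step I expect to need the most care. The boundary cases $r_i=0$ and the strict versus non-strict inequalities in the definitions must be checked explicitly, to confirm there is no downward discontinuity.

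\textbf{Induction on $\phi$.}

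\emph{Predicates.} For $\mu$ evaluated at a time $t$:
\begin{itemize}
\item If $t\le t'$, the value is fixed by the prefix, so IRTM returns the degenerate interval $[\eta,\eta]$, which is trivially sound.
\item If $t>t'$, IRTM returns the full range $[-1,1]$ (or the normalized bounds of $\frac12(h-\varsigma)$). This contains every completion's value by the problem statement, since $\eta(\cdot)\in[-1,1]$.
\end{itemize}

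\emph{Boolean connectives.} For $\land$ and $\lor$, the induction hypothesis gives $\eta(\mathbf{s},\phi_i)\in[\eta]_{\phi_i}$ for every completion $\mathbf{s}$. Monotonicity then gives soundness of $\mathbf{AGM}_{\circ}$.

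\emph{Negation.} If negation is handled, the interval flips to $[-\overline{\eta},-\underline{\eta}]$, which is sound.

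\emph{Temporal operators.} For $\mathbf{G}_{[a,b]}$ and $\mathbf{F}_{[a,b]}$, I split the window $[t+a,t+b]$ into observed time points ($\le t'$) and unobserved ones. Observed points contribute intervals given by the induction hypothesis, which may be exact. Unobserved points contribute $[-1,1]$. Applying $\mathbf{AGM}_{\circ}$ over all $N$ time points then yields a sound interval by monotonicity.

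\textbf{Linking to the algorithm.} The remaining step is to show that Algorithm~\ref{alg:EfficientIncrem_IRTM} and Algorithm~\ref{alg:IRTM_T} actually compute these interval aggregates. I argue this by a secondary induction on $t'$.

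Assume the interval stored at step $t'-1$ equals the interval aggregate of its constituent bounds. A new observation replaces some placeholder bounds with tighter ones, or appends a new element. By Lemma~\ref{lemma:mdf_correctness}, applying $\texttt{mdf\_AGM}_{\circ}$ separately to the lower and upper endpoints reproduces exactly $\mathrm{AGM}_{\circ}$ of the updated endpoint lists. The incrementally maintained interval therefore coincides with the full recomputation, and soundness transfers from the structural induction.

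One point needs checking here. Lemma~\ref{lemma:mdf_correctness} is stated for appending an element, whereas the update may instead replace a $[-1,1]$ placeholder. If the algorithm uses replacement, I would handle it by viewing the lower and upper endpoint sequences as being built by appending only realized or worst-case values in order. Failing that, I would bound the result by monotonicity, since replacing $[-1,1]$ by a sub-interval can only tighten the bounds while keeping them sound.
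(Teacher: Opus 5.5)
Your structural induction is the paper's own argument, and for the abstract interval semantics it is correct and sharper than the paper's. The gap is in your secondary induction linking that semantics to the algorithm: it cannot be closed, because the algorithm as written is unsound at its terminal step.

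\textbf{Where you match the paper.} Appendix~\ref{apndx:ProofofROSIsoundness} also inducts on $\phi$ and settles the Boolean and temporal cases by appeal to ``properties of interval arithmetic on AGM operations''. Your coordinatewise monotonicity lemma is exactly what that appeal needs, and it is correct. The boundary cases you flagged are already covered by your region argument. For $\land$, the geometric branch is strictly positive and the arithmetic branch is non-positive, so raising one coordinate can only cause an upward jump; $\lor$ is dual. In the abstract semantics, each window point contributes its inductive interval, or $[-1,1]$ if unobserved, combined by $\mathbf{AGM}_{\circ}$.

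\textbf{The gap.} You assume Algorithms~\ref{alg:EfficientIncrem_IRTM}--\ref{alg:IRTM_T} compute that abstract aggregate, but $\texttt{IRTM}_{\mathbf{T}}$ never replaces a placeholder. It pads once to length $\|\phi\|$, using $\underline{\eta}',-1,\dots,-1$ and $\overline{\eta}',1,\dots,1$. Afterwards it only applies $\texttt{mdf\_AGM}_{\circ}$ to the padded aggregates. Lemma~\ref{lemma:mdf_correctness}, for the $N$ under which it applies, therefore gives $\mathrm{AGM}_{\circ}$ of the padded list \emph{with the new value appended}. That list keeps every $\pm1$ and grows, so it is not your aggregate. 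Your monotonicity fallback only compares lists of equal length, so it cannot bridge the difference.

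\textbf{Where soundness actually fails.} The terminal branch $t'\ge t_s+b$ collapses the still-padded upper aggregate to a singleton. Take $\phi=\mathbf{G}_{[1,2]}\mu$, so $\|\phi\|=2$ equals the window size, with predicate robustness $\tfrac{1}{2}$ at both window points. After the first point the algorithm stores the exact interval $[-\tfrac{1}{2},\sqrt{3}-1]$. At $t'=t_0+2$ it returns $\{\texttt{mdf\_AGM}_{\land}(\sqrt{3}-1,N,\tfrac{1}{2})\}=\bigl\{\sqrt[N]{(\sqrt{3})^{N-1}\cdot\tfrac{3}{2}}-1\bigr\}$. Since $\sqrt{3}>\tfrac{3}{2}$, for every $N\ge 2$ this exceeds the true value $\mathrm{AGM}_{\land}(\tfrac{1}{2},\tfrac{1}{2})=\tfrac{1}{2}$.

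\textbf{Further obstacles.} The padding length $\|\phi\|$ can be smaller than the window: $\|\mathbf{G}_{[0,1]}\mu\|=1$, so the first interval is already a singleton. Nested temporal operators are evaluated only relative to the single start time $t_s$. The third case of $\texttt{mdf\_AGM}_{\land}$ is printed with $+[\eta]_-$, where the symmetric derivation gives $-[\eta]_-$.

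\textbf{What is actually established.} The paper's proof never attempts this link; it silently identifies the $\texttt{IRTM}$ output with the abstract semantics. Your argument, like the paper's, proves soundness of that semantics only. The literal statement would first require an update rule that substitutes observed values for placeholders and computes the exact value without padding once $t'\ge t_s+b$. For example, such a rule could carry running sums of $[\cdot]_{\mp}$ and running products of $(1\pm\cdot)$ over the observed points.
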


\begin{proof}[Sketch]
The proof follows by structural induction over STL formulae, leveraging properties of arithmetic and geometric means in the AGM robustness definition. The full proof is provided in Appendix \ref{apndx:ProofofROSIsoundness}.
\end{proof}

Building on soundness, we establish the relationship between robustness intervals as partial trajectories grow:

\begin{theorem}[AGM Robustness Interval Chain Inclusion]\label{thm:rosiChainInclusion}
    Given a partial signal $\mathbf{s}_{t_0,t^\prime}$, STL formula $\phi$, and $t \leq t^\prime$, we have $[\eta]_{\mathbf{s}_{t^\prime},\phi} \subseteq [\eta]_{\mathbf{s}_{t},\phi}$.
\end{theorem}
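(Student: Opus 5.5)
We prove the chain inclusion by structural induction on $\phi$, and within each inductive step by induction on $t^\prime - t$. It therefore suffices to show one-step monotonicity: $[\eta]_{\mathbf{s}_{t+1},\phi} \subseteq [\eta]_{\mathbf{s}_{t},\phi}$ for every $t < t^\prime$. The general case then follows by transitivity of $\subseteq$. Throughout we use the interval semantics computed by \texttt{IRTM}. At an unobserved time point, a predicate carries the uninformative interval $[-1,1]$. Once the point is observed, it carries the degenerate interval $[\eta(\mathbf{s}_{t},\mu),\eta(\mathbf{s}_{t},\mu)]$, which lies in $[-1,1]$ because all robustness values are normalized to $[-1,1]$.

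\textbf{Base case.} Let $\phi=\mu$ (and $\top,\bot$). Extending the prefix either leaves the interval unchanged, or replaces $[-1,1]$ by a single point inside $[-1,1]$. In both cases the new interval is contained in the old one.

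\textbf{Key monotonicity fact.} The central property is that $\mathrm{AGM}_\lor$ and $\mathrm{AGM}_\land$ are nondecreasing in each argument on $[-1,1]^N$. We check this piecewise. On the geometric branches, $-\sqrt[N]{\prod(1-r_i)}+1$ and $\sqrt[N]{\prod(1+r_i)}-1$ are clearly monotone, since every factor is nonnegative. On the arithmetic branches, $r\mapsto[r]_\pm$ is nondecreasing. Monotonicity across the branch boundary needs a separate continuity check. For $\lor$: when the largest argument moves from negative to positive, the value moves from a negative number to a nonnegative one, so it does not decrease. At $r_i=0$ with all other $r_j<0$, the geometric branch gives a value $\le 0$ and the arithmetic branch gives $0$, so the jump is upward. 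The $\land$ case is symmetric.

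Given this fact, the interval arithmetic in (\ref{eq:AGM_interval_Arithm}) is inclusion-monotone. If $I_i^\prime\subseteq I_i$ for all $i$, then $\underline{I}_i\le\underline{I}^\prime_i$ and $\overline{I}^\prime_i\le\overline{I}_i$. Monotonicity of $\mathrm{AGM}_\circ$ then gives $\mathbf{AGM}_\circ(\mathbf{I}^\prime)\subseteq\mathbf{AGM}_\circ(\mathbf{I})$.

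\textbf{Inductive step.} For $\wedge,\lor$, the children's intervals shrink by the induction hypothesis, and inclusion-monotonicity closes the case. Negation maps $[\underline\eta,\overline\eta]$ to $[-\overline\eta,-\underline\eta]$, which preserves inclusion. For $\mathbf{G}_{[a,b]}$ and $\mathbf{F}_{[a,b]}$, the interval at step $t$ is $\mathbf{AGM}_\circ$ applied to the subformula intervals at the $b-a+1$ window points. Moving from $t$ to $t+1$ shrinks or preserves each of these intervals: by the induction hypothesis on $\phi_1$ for points already partially observed, and by $[-1,1]$ shrinking to a subinterval for newly reached points. Crucially, $N$ is the fixed window length, so the same aggregation is applied to pointwise-smaller intervals.

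\textbf{Main obstacle.} The hardest step is reconciling this with \texttt{IRTM}'s incremental implementation. The algorithm uses the incremental modification functions, which conceptually grow $N$, rather than a fixed-length aggregation. We would resolve this by invoking Lemma~\ref{lemma:mdf_correctness}: the incremental update equals the full AGM recomputation over the fixed window with the current per-point intervals, so the output coincides with the fixed-window formula used above. A second subtle point is the branch-boundary monotonicity of AGM noted above, which we would verify carefully at $r_i=0$. As a sanity check, consistent with Lemma~\ref{lemma:rosiSoundness}, the completion sets are nested, $\mathfrak{C}(\mathbf{s}_{t_0,t^\prime})\subseteq\mathfrak{C}(\mathbf{s}_{t_0,t})$, so this direction is expected.
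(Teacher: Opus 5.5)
Your core argument is sound, and it is a sharper route than the paper's. The paper infers the inclusion from $\mathfrak{C}(\mathbf{s}_{t_0,t'})\subseteq\mathfrak{C}(\mathbf{s}_{t_0,t})$ together with Lemma~\ref{lemma:rosiSoundness}. But soundness only says each interval \emph{contains} the completion values. Nested completion sets force nested intervals only if each interval is exactly the hull of those values, and that is never established. The paper's appendix then only asserts that bounds ``can only become tighter'' and defers to an unspecified structural induction. You supply the missing ingredient: $\mathrm{AGM}_\lor$ and $\mathrm{AGM}_\land$ are nondecreasing in every argument on $[-1,1]^N$, and your branch-switch checks are correct. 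Hence the interval extension (\ref{eq:AGM_interval_Arithm}) is inclusion-isotone. Structural induction then gives one-step shrinkage for a semantics in which each temporal operator aggregates a window of fixed length $b-a+1$ whose unobserved entries carry $[-1,1]$. For that semantics your proof is complete, and you are right to treat the completion argument only as a sanity check.

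The gap is the bridge to \texttt{IRTM}, which is what $[\eta]_{\mathbf{s}_{t},\phi}$ denotes in the statement (cf.\ Lemma~\ref{lemma:rosiSoundness}). Lemma~\ref{lemma:mdf_correctness} says $\texttt{mdf\_AGM}_\circ(\eta,N,\eta')=\mathrm{AGM}_\circ(r_1,\dots,r_{N-1},\eta')$ where $\eta=\mathrm{AGM}_\circ(r_1,\dots,r_{N-1})$. That is, it \emph{appends} an $N$-th argument; it says nothing about \emph{overwriting} a padded entry of a fixed window. Algorithm~\ref{alg:IRTM_T} pads to length $\|\phi\|$ (not $b-a+1$) with $\mp1$ at initialization and thereafter only appends through \texttt{mdf\_AGM}, so the padding is never retired. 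Its output is therefore not your fixed-window formula.

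The difference is substantive. For $\overline{\eta}>0$, the arithmetic branch gives $\texttt{mdf\_AGM}_\lor(\overline{\eta},N,\eta')=\overline{\eta}+([\eta']_+-\overline{\eta})/N$, which exceeds $\overline{\eta}$ whenever $[\eta']_+>\overline{\eta}$. Take $\mathbf{F}_{[0,b]}\mu$ with $b\ge 2$. A first value $r_1\le 0$ gives $\overline{\eta}=\mathrm{AGM}_\lor(r_1,1,\dots,1)=(b-1)/b$. A second value $r_2=1$ then strictly raises the upper bound, whatever the unspecified $N$ is. So one-step inclusion fails for the monitor as written, and no appeal to Lemma~\ref{lemma:mdf_correctness} can close your argument. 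The paper's own proof never engages with the update rule either, so it offers no bridge you could borrow.

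To finish, you have two options. Either take your fixed-window semantics as the definition of the interval. Or replace the append-update by an overwrite-update, in which each new value replaces one $-1$ in the lower-bound list and one $+1$ in the upper-bound list, and prove it coincides with the fixed-window formula. The overwrite-update can still run in $O(1)$ if you keep running sums of signed parts, running products, and a sign flag.
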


\begin{proof}[Sketch]
Since $\mathbf{s}_{t_0,t^\prime}$ contains more information than $\mathbf{s}_{t_0,t}$ when $t^\prime > t$, the set of possible completions reduces to $\mathfrak{C}(\mathbf{s}_{t_0,t^\prime}) \subseteq \mathfrak{C}(\mathbf{s}_{t_0,t})$. By Lemma \ref{lemma:rosiSoundness}, the AGM robustness interval contains all possible robustness values of completions. Fewer possible completions yield narrower robustness value ranges, resulting in the inclusion property. The full proof is in Appendix \ref{apndx:ChainInclusion}.
\end{proof}

A natural consequence is interval convergence as trajectories extend:

\begin{corollary}[AGM Robustness Interval Convergence]\label{corollary:AGM_convgnc} As the partial trajectory extends through additional time steps, AGM robustness bounds shrink. When the trajectory horizon equals or exceeds the formula horizon $||\phi||$, the interval converges to a singleton containing the exact AGM robustness value: $[\eta]_{\mathbf{s}_{t^\prime},\phi} = \{\eta(\mathbf{s}_{t_0,t^\prime}, \phi,t_0)\}$ for $t^\prime \geq ||\phi||$.
\end{corollary}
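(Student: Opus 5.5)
The corollary has two parts: monotone shrinking, and collapse to a singleton once $t^\prime \geq \|\phi\|$. The first part is exactly Theorem \ref{thm:rosiChainInclusion}. For any $t_0 \le t \le t^\prime$ we have $[\eta]_{\mathbf{s}_{t^\prime},\phi}\subseteq[\eta]_{\mathbf{s}_{t},\phi}$. So the lower endpoints $\underline{\eta}$ are nondecreasing in $t^\prime$ and the upper endpoints $\overline{\eta}$ are nonincreasing. Both sequences live in $[-1,1]$, so they converge. This already gives the claim that the bounds shrink.

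For the singleton claim, I would argue by structural induction on $\phi$, in parallel with the proof of Lemma \ref{lemma:rosiSoundness}. The invariant is: if $t^\prime \ge t_0 + \|\phi\|$, then the interval returned by \texttt{IRTM} for $\phi$ at start time $t_0$ is degenerate, $\underline{\eta}=\overline{\eta}$. (The statement implicitly takes $t_0=0$, or measures horizons relative to $t_0$.) The cases are:
\begin{itemize}
\item \emph{Predicate} $\mu$: here $\|\mu\|=0$, and once $\mathbf{s}_{t_0}$ is observed the monitor returns $[\tfrac12(h(\mathbf{s}_{t_0})-\varsigma),\tfrac12(h(\mathbf{s}_{t_0})-\varsigma)]$.
\item \emph{Conjunction/disjunction}: $\|\phi\|=\max\|\phi_i\|$, so every child is a singleton by the induction hypothesis. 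The interval arithmetic (\ref{eq:AGM_interval_Arithm}) then applies $\mathrm{AGM}_\circ$ to identical lower and upper vectors, which yields equal endpoints.
\item \emph{Temporal operators} $\mathbf{G}_{[a,b]}\phi_1$, $\mathbf{F}_{[a,b]}\phi_1$: every evaluation time $\tau\in[t_0+a,t_0+b]$ satisfies $\tau+\|\phi_1\|\le t_0+b+\|\phi_1\|=t_0+\|\phi\|\le t^\prime$. Hence each child interval is a singleton, and so is the aggregated interval.
\end{itemize}
Negation, if handled, just swaps and negates the endpoints.

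Given degeneracy, identifying the value is immediate. The prefix $\mathbf{s}_{t_0,t^\prime}$ is itself a completion of itself, so Lemma \ref{lemma:rosiSoundness} gives $\eta(\mathbf{s}_{t_0,t^\prime},\phi,t_0)\in[\eta]_{\mathbf{s}_{t^\prime},\phi}$. A singleton containing this value must equal it. As a side remark, every completion has the same robustness, since the semantics (\ref{eq:eta_semantics}) only read samples up to $t_0+\|\phi\|$.

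The main obstacle is the temporal case inside Algorithm \ref{alg:IRTM_T}. I need to confirm that once all window points are observed, the monitor replaces the placeholder bounds for unobserved points (presumably $\pm1$) with the computed child values. This is where the incremental updates via $\texttt{mdf\_AGM}_\circ$ come in, and Lemma \ref{lemma:mdf_correctness} guarantees they agree with full recomputation, i.e. with $\mathrm{AGM}_\circ$ over the $N=b-a+1$ children. I would check this step against the algorithm's handling of $N$ and of the sign-switching (transition) cases, so that no stale unobserved contribution survives once $t^\prime\ge\|\phi\|$.
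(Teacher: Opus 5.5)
Your proposal is correct and follows the paper's route. Theorem~\ref{thm:rosiChainInclusion} gives the shrinking, full observability of the horizon gives the collapse, your structural induction spells out what Appendix~\ref{apndx:ChainInclusion} only says ``can be formalized,'' and Lemma~\ref{lemma:rosiSoundness} pins down the singleton's value. The step you flag as the main obstacle is not needed for degeneracy, since Algorithm~\ref{alg:IRTM_T} explicitly returns $[\eta',\eta']$ once $t'\ge t_s+b$ and $b\le\|\phi\|$; note only that treating $\mathbf{s}_{t_0,t'}$ as its own completion requires dropping the restriction $t'<\|\phi\|$ in the definition of completions, as the paper also tacitly does in Case~1 of its soundness proof, where the fully observed interval is exact by definition.
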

\begin{proof}[Sketch]
    This follows directly from the chain inclusion property. Once $t^\prime \geq ||\phi||$, the formula can be fully evaluated on the available trajectory, leaving no uncertainty about the robustness value.
\end{proof}

These properties ensure that our AGM robustness monitor provides sound, increasingly precise estimates of specification satisfaction as trajectories evolve, making it ideal for incremental motion planning.

\begin{algorithm}[t]\scriptsize
    \KwIn{ $\phi$, $[\eta]$, $[\eta]_{\mathrm{aux}}$, $\mathbf{s}_{t^\prime}$, $t^\prime$,$t_s$}
    \KwOut{$[\eta]'(\phi)$}
    \DontPrintSemicolon
    \BlankLine
    \If{$[\eta]_{\mathrm{aux}} \neq \emptyset$}{
        \Return $[\eta]_{\mathrm{\phi}}$
    }
    \ElseIf{$\phi = \mu$}{
        $[\eta]_{\mathrm{aux}} \gets [h(\mathbf{s}_{t^\prime}),h(\mathbf{s}_{t^\prime})]$
    }
    \ElseIf{$\phi \in \left\{ \bigwedge \phi_i, \bigvee \phi_i \right\}$}{
        $\mathfrak{E}\gets \emptyset$\;
        \ForEach{$i \in \{1,\ldots,N\}$}{
            $\mathfrak{E} \gets \mathfrak{E} \cup \{\texttt{IRTM}(\phi_i, [\eta], [\eta]_{\mathrm{aux}}, \mathbf{s}_{t^\prime},t^\prime,t_s)\}$
        }
        \eIf{AND}{
            $[\eta]_{\mathrm{aux}}(\phi) \gets \mathrm{\mathbf{AGM}}_{\land}(\mathfrak{E})$
        }{
            $[\eta]_{\mathrm{aux}}(\phi) \gets \mathrm{\mathbf{AGM}}_{\lor}(\mathfrak{E})$
        }
    }
    \ElseIf{$\phi = \textbf{G}_{[a,b]}\; \phi_1$}{
        $[\eta]_{\mathrm{aux}} \gets \texttt{IRTM}(\phi_1, [\eta],\emptyset, \mathbf{s}_{t^\prime},t^\prime,t_s)$\;
        $[\eta]_{\mathrm{aux}} \gets \texttt{IRTM}_{\textbf{T}}([\eta], [\eta]_{\mathrm{aux}}, t^\prime,t_s,a,b,||\phi||,\textbf{G})$ \Comment{Algorithm \ref{alg:IRTM_T}}\label{line:alg_ITRM_G}
    }
    \ElseIf{$\phi = \textbf{F}_{[a,b]}\phi_1$}{
        $[\eta]_{\mathrm{aux}} \gets \texttt{IRTM}(\phi_1, [\eta],\emptyset, \mathbf{s}_{t^\prime},t^\prime,t_s)$\;
        $[\eta]_{\mathrm{aux}} \gets \texttt{IRTM}_{\textbf{T}}([\eta], [\eta]_{\mathrm{aux}}, t^\prime,t_s,a,b,||\phi||,\textbf{F})$\Comment{Algorithm \ref{alg:IRTM_T}}\label{line:alg_ITRM_F}
    }
    \Return{$[\eta]_{\mathrm{aux}}(\phi)$}
    \caption{ $\texttt{IRTM}(\phi,[\eta],[\eta]_{\mathrm{aux}}, \mathbf{s}_{t^\prime}, t^\prime, t_s)$}
 \label{alg:EfficientIncrem_IRTM}
\end{algorithm}

\setlength{\textfloatsep}{15pt}
\begin{algorithm}[t]\footnotesize
    \textbf{Input:} $[\underline{\eta},\overline{\eta}]\gets [\eta]$, $[\underline{\eta}^\prime,\overline{\eta}^\prime]\gets[\eta]_{\mathrm{aux}}$, $t_s$, $t^\prime$, $a$, $b$\\
    \textbf{Output:} $[\underline{\eta},\overline{\eta}]$\\
        \If{$t^\prime < t_s+a$}
            {
            \Return{$\emptyset$}
            }
        \If{$\underline{\eta}=\overline{\eta}$}
        {\Return{$[\underline{\eta},\overline{\eta}]$}}
        \If{$[\eta]=\emptyset$}
            { $\mathfrak{I}_{-}\gets\{\underline{\eta}^\prime,-1,\dots,-1\}$, $\mathfrak{I}_{+}\gets\{\overline{\eta}^\prime,1,\dots,1\}$; where $|\mathfrak{I}_{-}|=|\mathfrak{I}_{+}|=||\phi||$ \\
            \If{\texttt{type}=\textbf{G}}{$\underline{\eta}^\prime\gets\mathrm{AGM}_{\land}(\mathfrak{I}_{-})$\\
            $\overline{\eta}^\prime\gets\mathrm{AGM}_{\land}(\mathfrak{I}_{+})$}
            
            \If{\texttt{type}=\textbf{F}}{$\underline{\eta}^\prime\gets\mathrm{AGM}_{\lor}(\mathfrak{I}_{-})$\\$\overline{\eta}^\prime\gets\mathrm{AGM}_{\lor}(\mathfrak{I}_{+})$}\Return{$[\eta^\prime,\eta^\prime]$}}
            \eIf{$t^\prime \geq t_s+b$}
        {
        \If{\texttt{type}=\textbf{G}}{$\eta^\prime\gets\texttt{mdf\_AGM}_{\land}(\overline{\eta},N,\overline{\eta}^\prime)$}
        \If{\texttt{type}=\textbf{F}}{$\eta^\prime\gets\texttt{mdf\_AGM}_{\lor}(\overline{\eta},N,\overline{\eta}^\prime)$}
        \Return{$[\eta^\prime,\eta^\prime]$}}
        {
        \If{\texttt{type}=\textbf{G}}{$\underline{\eta}^\prime\gets\texttt{mdf\_AGM}_{\land}(\underline{\eta},N,\underline{\eta}^\prime)$\\
        $\overline{\eta}^\prime\gets\texttt{mdf\_AGM}_{\land}(\overline{\eta},N,\overline{\eta}^\prime)$}
        
         \If{\texttt{type}=\textbf{F}}{$\underline{\eta}^\prime\gets\texttt{mdf\_AGM}_{\lor}   (\underline{\eta},N,\underline{\eta}^\prime)$
        
        $\overline{\eta}^\prime\gets\texttt{mdf\_AGM}_{\lor}(\overline{\eta},N,\overline{\eta}^\prime)$}
        \Return{$[\underline{\eta}^\prime ,\overline{\eta}^\prime]$}
        }
	\caption{$\texttt{IRTM}_{\textbf{T}}([\eta],[\eta]_{\mathrm{aux}},t_s,t^\prime,a,b,||\phi||,\texttt{type})$}
 \label{alg:IRTM_T}
\end{algorithm}
\setlength{\textfloatsep}{14pt}

\subsection{Complexity Analysis}\label{subsec:IRTM_complxty}

For an STL formula $\phi$ with $|\phi|$ operators and predicates, each call to $\texttt{IRTM}$ (Algorithm \ref{alg:EfficientIncrem_IRTM}) processes a single new observation by performing a recursive traversal of the formula's abstract syntax tree. The algorithm visits each node exactly once: predicates evaluate $h(\mathbf{s}_{t^\prime})$ in $O(1)$ time (Line 4), Boolean operators with $m$ subformulae recursively evaluate each subformula then aggregate results in $O(m)$ time (Lines 5-12), and temporal operators invoke $\texttt{IRTM}_{\mathbf{T}}$ after evaluating the subformula (Lines 13-18). The subroutine $\texttt{IRTM}_{\mathbf{T}}$ (Algorithm \ref{alg:IRTM_T}) executes in $O(1)$ time, performing only conditional checks and calls to $\texttt{mdf\_AGM}_{\land}$ or $\texttt{mdf\_AGM}_{\lor}$, which require constant-time arithmetic by Lemma \ref{lemma:mdf_correctness}. Thus, each $\texttt{IRTM}$ invocation has complexity $O(|\phi|)$.

In contrast, a non-incremental implementation that recomputes the entire robustness interval from scratch at each time step must evaluate the formula over the complete partial trajectory $\mathbf{s}_{t_0,t^\prime}$, requiring $O(|\mathbf{s}_{t_0,t^\prime}| \cdot |\phi|)$ operations per update. Over a complete trajectory of length $|\mathbf{s}|$, the incremental approach requires $O(|\mathbf{s}| \cdot |\phi|)$ total operations while non-incremental methods require $O(|\mathbf{s}|^2 \cdot |\phi|)$, providing a linear speedup factor of $O(|\mathbf{s}|)$. For typical planning scenarios with $|\mathbf{s}| \approx 20$-$50$ and $|\phi| \approx 10$-$20$, this reduces monitoring overhead from thousands to hundreds of operations per trajectory evaluation.

\section{RRT$^\eta$ Motion Planning Algorithm}\label{sec:RRTeta}
In this section, we detail the algorithmic formulation of RRT$^\eta$, which extends the sampling-based planning approach of STL-RRT$^\ast$ \cite{CristiKaraman17_STL_RRTstar} to incorporate AGM robustness for STL specifications.

\subsection{Overview}\label{subsec:RRTetaOverview}
RRT$^\eta$ enhances sampling-based motion planning through integration of AGM robustness for STL specifications. The algorithm operates through two key components working together to efficiently find robust solutions.

The main planning loop (Algorithm \ref{alg:RRTeta}) constructs a tree of states that progressively explores the state space while guided by AGM robustness considerations. The algorithm balances exploration and exploitation by sampling with STL-guided bias and steering toward configurations maximizing AGM robustness. For each potential connection, the algorithm computes the Direction of Increasing AGM Satisfaction (DIAS), which guides steering toward regions with higher specification satisfaction.

Supporting this process, an update procedure (Algorithm \ref{alg:updateAGM}) maintains and propagates AGM robustness intervals throughout the tree. This procedure incrementally processes trajectory segments, evaluating them against the STL specification using our $\texttt{IRTM}$ function. Importantly, this allows informed decisions about node additions and rewiring operations based on partial trajectories before complete paths are established.

The synergy between efficient tree construction and incremental robustness evaluation enables RRT$^\eta$ to identify solutions that not only satisfy STL specifications but do so with maximum robustness across all aspects of the specification.

The synergy between efficient tree construction and incremental robustness evaluation enables RRT$^\eta$ to identify solutions that not only satisfy STL specifications but do so with maximum robustness across all specification aspects.

\subsection{Direction of Increasing AGM Satisfaction}\label{subsec:DIAS}

Our RRT$^\eta$ algorithm leverages a modified version of the DIS concept introduced by Vasile et al. \cite{CristiKaraman17_STL_RRTstar}. The key innovation is integration of AGM robustness intervals $[\eta]_{\mathbf{s}_{t^\prime},\phi}$ to guide tree expansion and steering processes.

The DIS provides gradient-like information guiding exploration toward regions that improve specification satisfaction. We define the Direction of Increasing AGM Satisfaction (DIAS) as a function $\chi_\eta:\mathbb{R}^n\times \Phi\to\mathbb{R}^n$ that takes a trajectory, an STL formula, and a time point, then computes a vector pointing in the direction of increasing AGM robustness.

\subsubsection{Base Cases and Temporal Operators}

For base cases and temporal operators, the DIAS computation follows the established structure. Considering state $\boldsymbol{q}_{t}$ of system (\ref{eq:system}), we define:

\begin{flalign}
\begin{aligned}\label{eq:chi_semantics}
     &\chi_\eta(\boldsymbol{q}_{t}, \top) := \mathbf{0}_n\\
     &\chi_\eta(\boldsymbol{q}_{t}, \mu) :=
\begin{cases}
\nabla_{\boldsymbol{q}}\eta(\boldsymbol{q}_{t}, \mu)^{\mathrm{T}} \cdot \mathbf{J}_f(\boldsymbol{q}_t, \boldsymbol{u}_t);\\
\quad \text{if } \nabla_{\boldsymbol{q}}\eta(\boldsymbol{q}_{t}, \mu)^{\mathrm{T}} \cdot \mathbf{J}_f(\boldsymbol{q}_t, \boldsymbol{u}_t)\cdot \boldsymbol{u}_t > 0\\
\mathbf{0}_n; \quad\text{otherwise}
\end{cases}\\
    &\chi_\eta(\boldsymbol{q}_{t}, \mathbf{G}_{[a,b]}\phi_1) = \chi_\eta(\boldsymbol{q}_{t}, \mathbf{F}_{[a,b]}\phi_1) := \chi_\eta(\boldsymbol{q}_{t}, \phi_1)
\end{aligned}
\end{flalign}
where $\mathbf{J}_f(\boldsymbol{q}_t, \boldsymbol{u}_t)= [\frac{\partial f}{\partial q_1}\dots\frac{\partial f}{\partial q_n}]$ is the Jacobian matrix of dynamics function $f$ evaluated at state $\boldsymbol{q}_t$ and control input $\boldsymbol{u}_t$. The dot product in $\chi_{\eta}(\boldsymbol{q}_t,\mu)$ indicates whether system dynamics will naturally increase the predicate's satisfaction measure at the current state.

\begin{example}[DIAS of Fixed Control Action]
    Consider a 2D discrete-time dynamical system with state $q = [x, y]^\top \in \mathbb{R}^2$ and control input $u \in \mathbb{R}^2$. The system dynamics are given by $q_{k+1} = q_k + u$. The Jacobian with respect to the state is $\mathbf{J}_f  = \mathbf{I}_2$, where $\mathbf{I}_2\in\mathbb{R}^{2\times 2}$ is the identity matrix.
    
    Let the spatial predicate be $\mu = \{q \mid \|q - c\|^2 \leq r^2\}$ with center $c = [3.5, 3.5]^\top$ and radius $r = 1$. The robustness is $\eta(q, \mu) = r - \|q - c\|$, normalized to range $[-1, 1]$. The gradient is $        \nabla_q \eta(q,\mu) = -\frac{q - c}{\|q - c\|},$ which points toward the center $c$ with unit magnitude.
    
    For a fixed control input $u = [0.5, 0.5]^\top$, the DIAS is $\chi_\eta(q,\mu) = \nabla\eta(q,\mu)^\top$.
    
    However, $\chi_\eta(q,\mu)$ is nonzero only where the condition $\nabla\eta(q,\mu)^\top \cdot (q_{k+1} - q_k) > 0$ is satisfied, i.e., where the state transition aligns with the direction of increasing satisfaction. Figure~\ref{fig:dias_illustration} visualizes this: the DIAS field (d) is active only in regions where the gradient field (b) and the state transition direction (c) point in similar directions, ensuring that the control action moves the system toward higher robustness.
\end{example}

\begin{figure}
    \centering
    \includegraphics[width=0.8\columnwidth]{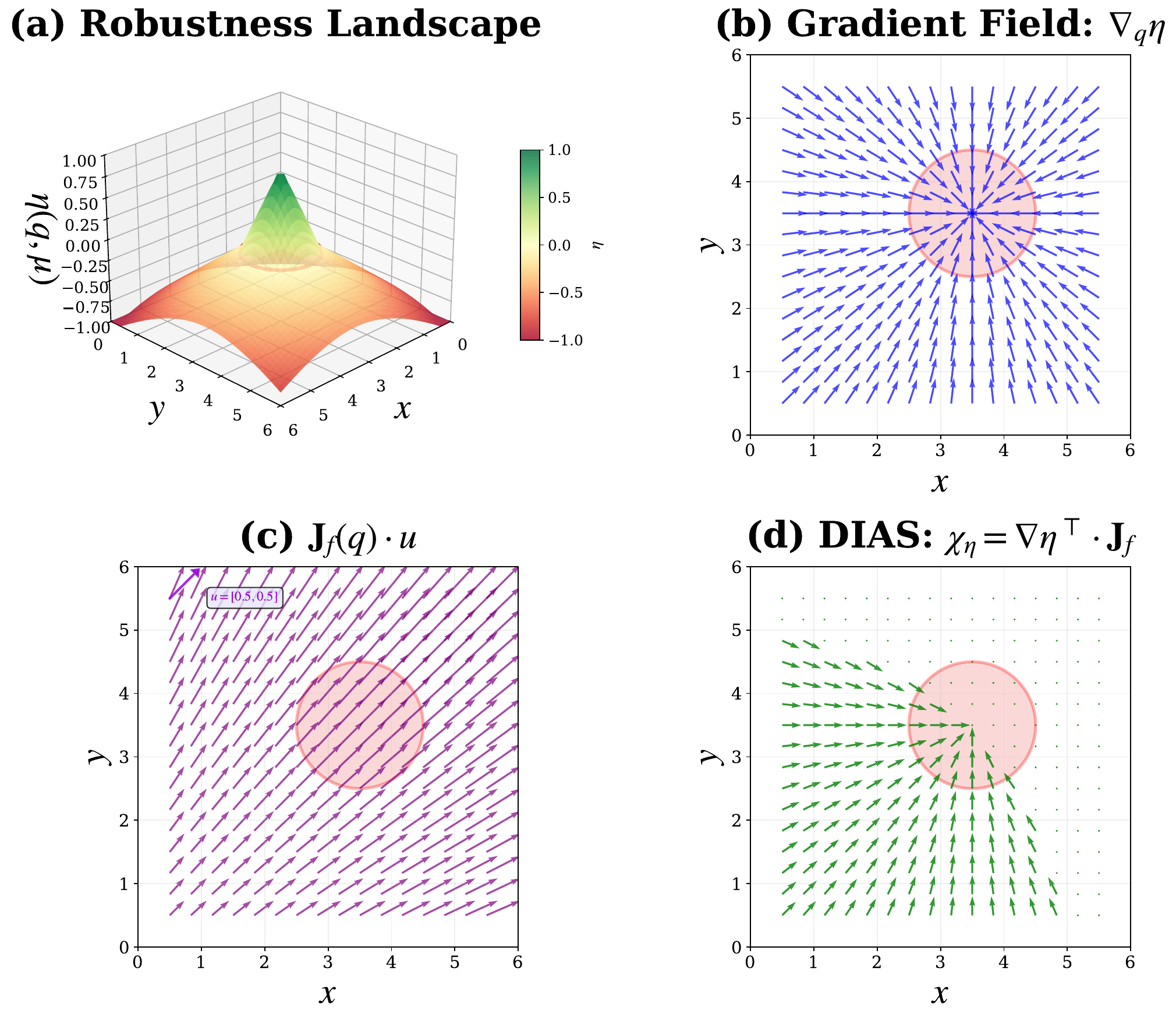}
     \caption{Illustration of the Direction of Increasing Satisfaction (DIAS) for a spatial predicate $\mu = \{q \mid \|q - c\|^2 \leq r^2\}$ with center $c = [3.5, 3.5]^\top$ and radius $r = 1$. (a) The robustness landscape $\eta(q, \mu)$ showing the satisfaction region (red circle) where $\eta > 0$. (b) The gradient field $\nabla_q \eta(q,\mu)$ pointing toward the direction of increasing satisfaction. (c) The state transition field $\mathbf{J}_f(q) \cdot u$ resulting from the control input $u = [0.5, 0.5]^\top$ and discrete-time dynamics with Jacobian $\mathbf{J}_f = \mathbf{I}_2$. (d) The DIAS field $\chi_\eta(q,\mu) = \nabla\eta(q,\mu)^\top \cdot \mathbf{J}_f$, which is nonzero only in regions where $\nabla\eta(q,\mu)^\top \cdot (q_{k+1} - q_k) > 0$ (condition satisfied).}
    \label{fig:dias_illustration}

\end{figure}

\subsubsection{Boolean Operators: The Composition Challenge}

For Boolean operators $\phi_1 \circ \phi_2$ where $\circ \in \{\wedge, \vee\}$, we must compose DIAS vectors from multiple subformulae. This composition is critical as it determines how the planner balances competing objectives. We define:

\begin{equation}\label{eq:chi_boolean}
\begin{aligned}
    \chi_\eta(\boldsymbol{q}_t, \phi_1 \circ \phi_2) :=& \texttt{compose}(\chi_{\eta}(\boldsymbol{q}_t,\phi_{1}), \\&\chi_{\eta}(\boldsymbol{q}_t,\phi_{2}), [\eta]_{\boldsymbol{q}_t,\phi_1}, [\eta]_{q_t,\phi_2})
\end{aligned}
\end{equation}

The choice of composition function significantly impacts planning performance. We present two approaches: a stochastic baseline adapted from prior work, and our principled FPL-based method.

\subsection{Composition Approaches for Boolean Operators}\label{subsec:composition}

\subsubsection{Stochastic Composition}\label{subsubsec:stochastic}

We first adapt the stochastic composition approach from Vasile et al. \cite{CristiKaraman17_STL_RRTstar} to work with AGM robustness intervals. While their approach was designed for traditional min-max robustness, we extend it to handle AGM intervals by defining:

\begin{equation}\label{eq:stochastic_compose}
\texttt{compose}_{\text{stoch}}(\chi_1, \chi_2, [\eta]_1, [\eta]_2) := \texttt{blend}_\eta(\chi_c, \chi_{\neg c})
\end{equation}

where $(c, \neg c) = \texttt{choose}_{\eta}([\eta]_1, [\eta]_2)$ with the stochastic choice function:

\begin{equation}\label{eq:choose}
\begin{aligned}
&\texttt{choose}_{\eta}([\eta]_{\boldsymbol{q}_t, \phi_1},[\eta]_{\boldsymbol{q}_t, \phi_2}):=\\
&\begin{cases}
(1,2) & \text{if } \underline{\eta}_1 < \underline{\eta}_2 \land \overline{\eta}_1 < \overline{\eta}_2\\
(2,1) & \text{if } \underline{\eta}_1 > \underline{\eta}_2 \land \overline{\eta}_1 > \overline{\eta}_2\\
(1+\text{Ber}(p),2-\text{Ber}(p)) & \text{otherwise}
\end{cases}
\end{aligned}
\end{equation}

where $p = 0.5 + \frac{(\underline{\eta}_1 + \overline{\eta}_1) - (\underline{\eta}_2 + \overline{\eta}_2)}{8}$, and $[\underline{\eta}_i,\overline{\eta}_i]=[\eta]_{\boldsymbol{q}_t, \phi_i}$ for $i \in \{1,2\}$.

This stochastic choice mechanism is crucial for preserving asymptotic optimality of RRT$^\ast$. By introducing randomness when the dominance relationship between robustness intervals is ambiguous, we ensure the algorithm maintains non-zero probability of exploring all possible directions. This randomized decision-making, while biased toward more promising options, prevents the algorithm from permanently excluding any potentially optimal regions of the state space.

The geometric blending function combines the selected directions:

\begin{equation}\label{eq:blend}
\begin{aligned}
\texttt{blend}_{\eta}(\chi_c, \chi_{\neg c}) :=
\begin{cases}
\chi_c + \chi_{\neg c} & \text{if } \chi_c \perp \chi_{\neg c}\\
\chi_c & \text{otherwise}
\end{cases}
\end{aligned}
\end{equation}

The intuition behind this blend function is to maximize useful gradient information while avoiding conflicting guidance. When DIAS vectors from two subformulae are orthogonal, they provide complementary information that doesn't conflict. In such cases, the blend function adds them together, allowing the algorithm to simultaneously improve satisfaction of both subformulae. When vectors aren't orthogonal, potential conflicts arise where improving one subformula might worsen another. In these cases, the blend function prioritizes the direction chosen by the choice function, preventing oscillatory behavior or poor progress from conflicting guidance.

\subsubsection{FPL-Based Principled Composition}\label{subsubsec:FPL}

To provide more principled composition that explicitly balances competing subformulae based on their fulfillment levels, we introduce an FPL-based approach leveraging the mathematical framework from Fulfillment Priority Logic \cite{mabsout2025FPL}.

\paragraph{Fulfillment Values and Power Means.} 

FPL provides a family of aggregation operators based on power means that unify minimum and maximum operations. We derive fulfillment values $f \in [0,1]$ from AGM robustness intervals by mapping from $[-1,1]$ to $[0,1]$:

\begin{equation}\label{eq:fulfillment_mapping}
f_i = \frac{(\underline{\eta}_i + \overline{\eta}_i + 2)}{4}
\end{equation}where $[\underline{\eta}_i, \overline{\eta}_i]$ is the AGM robustness interval for subformula $\phi_i$.

The power mean operator $\mu_p$ provides a continuous family of aggregation operations:

\begin{equation}\label{eq:power_mean}
\mu_p(\vec{f}) = \left(\frac{1}{n}\sum_{i=1}^{n}f_i^p\right)^{\frac{1}{p}}
\end{equation}

The parameter $p$ controls composition behavior: $p \rightarrow -\infty$ approaches minimum (conservative/conjunctive), $p = 1$ gives arithmetic mean, and $p \rightarrow \infty$ approaches maximum (optimistic/disjunctive). This provides a principled way to interpolate between worst-case and best-case aggregation.

\paragraph{FPL-Based Direction Composition} We define a unified FPL-based composition function for DIAS vectors:

\begin{equation}\label{eq:fpl_compose}
\begin{aligned}
\texttt{compose}_{\text{FPL}}(&\{(\chi_i, [\eta]_i)\}_{i=1}^m, \circ) :=\\
&\begin{cases}
\sum\limits_{i=1}^m \chi_i & \text{if } \chi_i \perp \chi_j, \forall i \neq j\\
\sum\limits_{i=1}^m w_i \cdot \chi_i & \text{otherwise}
\end{cases}
\end{aligned}
\end{equation}where the weights $w_i$ are computed using derivatives of the power mean:

\begin{equation}\label{eq:fpl_weights}
w_i = \frac{f_i^p \cdot \frac{\partial \mu_p(\vec{f})}{\partial f_i}}{\sum_{j=1}^m f_j^p \cdot \frac{\partial \mu_p(\vec{f})}{\partial f_j}} + \alpha_i
\end{equation}

Here, $p = -1$ for conjunction ($\circ = \land$) and $p = 1$ for disjunction ($\circ = \lor$). The term $\alpha_i$ adds controlled randomness to preserve asymptotic optimality:

\begin{equation}\label{eq:randomization}
\alpha_i = \beta \cdot r_i \cdot (1 - \max_{j \neq i}|f_i - f_j|)
\end{equation}where $\beta$ is a small scaling factor (typically 0.1), $r_i \in [-1,1]$ is randomly sampled, and the term $(1 - \max_{j \neq i}|f_i - f_j|)$ ensures randomness diminishes as fulfillment differences increase.

This formulation preserves the orthogonality consideration from the stochastic blend function while providing more nuanced weighting based on fulfillment contributions. The gradient-based weights naturally prioritize less-fulfilled objectives, providing principled balancing. The randomization term maintains probabilistic completeness and asymptotic optimality of RRT$^\ast$ while producing more principled direction compositions than purely stochastic selection.

\paragraph{Minimum Fulfillment Guarantees} A key advantage of using FPL is the power mean's minimum fulfillment bound:

\begin{equation}\label{eq:min_fulfillment}
\forall p \in \mathbb{R}, \vec{f} \in [0,1]^n, \min(\vec{f}) \geq \sqrt[p]{n((\mu_p(\vec{f}))^p - 1) + 1}
\end{equation}

This bound guarantees that when a power mean outputs value $y$, every input component must have at least fulfillment $\sqrt[p]{n(y^p - 1) + 1}$. For conjunction operations, more negative $p$ values provide stronger guarantees on minimum fulfillment, ensuring all subformulae are adequately satisfied rather than focusing predominantly on the most critical one.

\subsection{Algorithm Details}\label{subsec:RRTeta_details}

The RRT$^\eta$ algorithm, outlined in Algorithm \ref{alg:RRTeta}, constructs a tree $\mathcal{T}=(\mathcal{V},\mathcal{E})$ where $\mathcal{V}$ represents the set of nodes and $\mathcal{E}=\mathcal{V}\times\mathcal{V}$ denotes the set of edges. Each node $v\in\mathcal{V}$ contains several important attributes: (i) $v.\boldsymbol{q}$ representing the system configuration; (ii) $v.\phi$ capturing the active STL specification; (iii) $v.t$ indicating when the state is reached; (iv) $v.\texttt{control\_StateTraj}$ showing control inputs and resulting system trajectory from root node to this state with $v.\texttt{control\_StateTraj}.\mathbf{u}_{0,v.t-1} = \boldsymbol{u}_{0}\boldsymbol{u}_{1}\dots\boldsymbol{u}_{v.t-1}$ and $v.\texttt{control\_StateTraj}.\mathbf{q}_{0,v.t} = \boldsymbol{q}_{1}\boldsymbol{q}_{2}\dots\boldsymbol{q}_{v.t}$; (v) $v.\texttt{parent}$ identifying the parent node; (vi) $v.\texttt{ch}$ listing children nodes; (vii) $v.[\eta]$ the AGM robustness interval.

The algorithm initializes the tree with the initial state and STL specification (Line \ref{line:RRTeta_init}). Each iteration of the main loop (Lines \ref{line:RRTeta_mainloopStart}-\ref{line:RRTeta_mainloopEnd}) expands the tree toward satisfying the specification with maximum AGM robustness.

Our algorithm leverages the established sampling procedure from Vasile et al. \cite{CristiKaraman17_STL_RRTstar}, which identifies active STL predicates at a sampled time and generates configurations within these active predicate regions. At each iteration, we invoke the $\texttt{sample}$ function (Line \ref{line:RRTeta_sample}), which returns a time-state pair $(t^r, \boldsymbol{q}^r)$ biased toward regions relevant to the STL specification $\phi$. The algorithm then determines nearby nodes $\mathcal{N}$ respecting causality constraints (Line \ref{line:RRTeta_Near}) and generates a random convex coefficient $\lambda$ (Line \ref{line:RRTeta_lambda}) balancing movement toward the sampled configuration versus along the DIAS.

For each nearby node $v^\prime \in \mathcal{N}$ (Lines \ref{line:RRTeat_bestParentStart}-\ref{line:RRTeat_bestParentEnd}), the algorithm computes the optimal control using the steering function, which applies control inputs to extend trajectories between states (detailed in Section \ref{subsec:SteeringFuns}). The resulting state $\boldsymbol{q}^s$ is evaluated (Line \ref{line:RRTeta_q_eval}), and if it improves the current best solution while being reachable via the steering function, it becomes the new best solution (Lines \ref{line:RRTeta_assingBestPStart}-\ref{line:RRTeta_assingBestPEnd}).

After adding the new node to the tree (Line \ref{line:RRTeta_updateBeforeRewirig}), the algorithm performs rewiring to improve AGM robustness of existing nodes (Lines \ref{line:RRTeta_rewiringStart}-\ref{line:RRTeta_rewiringEnd}). This is a critical step where our approach differs from traditional RRT$^\ast$: while RRT$^\ast$ rewires to minimize path cost, RRT$^\eta$ rewires to maximize AGM robustness.

\subsubsection{Steering Functions}\label{subsec:SteeringFuns}

In RRT$^\eta$, the steering function plays a critical role in extending the tree toward promising regions of the state space. We define the basic steering function $\texttt{steer}: \mathcal{Q} \times \mathcal{U} \times \mathbb{Z}_{>0} \to \mathcal{Q}$, where $\boldsymbol{q}_s \gets \texttt{steer}(\boldsymbol{q}_{t_0}, \boldsymbol{u}, \Delta t)$ applies control input $\boldsymbol{u}$ to system (\ref{eq:system}) starting from initial state $\boldsymbol{q}_{t_0}$ for duration $\Delta t$, resulting in final state $\boldsymbol{q}_s$. This function simply evolves the system dynamics under the given control input, making it the natural mechanism for generating trajectories during planning.

For exact steering between states, we also define $\texttt{steer\_exct}: \mathcal{Q} \times \mathcal{Q} \to \bigcup_{i \in \mathbb{Z}_{>0}} (\mathcal{U}^i \times \mathcal{Q}^i) \cup \{\emptyset\}$, where $\varphi \gets \texttt{steer\_exct}(\boldsymbol{q}_{\mathrm{start}}, \boldsymbol{q}_{\mathrm{final}})$ returns the control-trajectory pair $\varphi = (\mathbf{u}_{0,T-1}, \mathbf{q}_{0,T-1})$ if $\boldsymbol{q}_{\mathrm{final}}$ is reachable from $\boldsymbol{q}_{\mathrm{start}}$, and $\emptyset$ otherwise.

The guided steering using DIAS relies on the $\texttt{steer}$ function. For each nearby node $v^\prime \in \mathcal{N}$ with time difference $\Delta t^r := t^r - v^\prime.t$, we compute the optimal control input $\boldsymbol{u}^\ast$ by solving:

\begin{equation}\label{eq:computing_optimal_u}
\begin{aligned}
    \boldsymbol{u}^\ast\gets\arg\min\limits_{\boldsymbol{u}\in\mathcal{U}} J_{\chi}&(\boldsymbol{u},\texttt{steer}(v.\boldsymbol{q},\boldsymbol{u},\Delta t^r),\\&v.\boldsymbol{q},v.\phi,\boldsymbol{q}^r,\Delta t^r,\lambda)
\end{aligned}
\end{equation}where the cost function $J_\chi$ balances two objectives:
\begin{equation}\label{eq:J_chi}
\begin{aligned}
J_{\chi}(u, q_s, v^{\prime}.q, v^\prime.\phi, q^r, \Delta t^r; \lambda) := &\\
\lambda \|q_s - (v^{\prime}.q + d_\chi \cdot \Delta t^r)\|^2 + (1-&\lambda) \|q_s - q^r\|^2
\end{aligned}
\end{equation}

Here, $d_\chi := \chi_\eta(v^\prime.\boldsymbol{q}_t, v^\prime.\phi)$ represents the DIAS vector, and $\lambda \in [0, 1]$ is a weighting factor balancing movement along the DIAS direction versus toward the random sample.

Solving the optimal control problem (\ref{eq:computing_optimal_u}) exactly is generally intractable for nonlinear systems, as it requires optimizing over the control space $\mathcal{U}$ subject to nonlinear dynamics $f(\cdot, \cdot)$ from equation (\ref{eq:system}). The problem is non-convex due to the coupling between control inputs and resulting states through the $\texttt{steer}$ function, and the cost landscape may contain local minima particularly when the DIAS vector $d_\chi$ points away from the sampled configuration $q^r$.

In practice, we employ gradient-based local optimization initialized from random samples in $\mathcal{U}$. For systems with differential constraints (e.g., unicycle), we use trajectory optimization with finite-horizon discretization, while for kinematic systems (e.g., KUKA manipulator with IK), the problem reduces to selecting among feasible IK solutions based on the cost function. The computational cost per optimization is $O(N_{\text{iter}} \cdot T_f)$, where $N_{\text{iter}} \approx 10$-$50$ iterations and $T_f$ is the forward dynamics evaluation time. This local optimization approach trades global optimality for computational efficiency, which is acceptable in the RRT$^\eta$ framework since multiple steering attempts occur during tree construction, and suboptimal individual connections can be improved through subsequent rewiring (Lines \ref{line:RRTeta_rewiringStart}--\ref{line:RRTeta_rewiringEnd}).

\subsubsection{AGM-Based Rewiring}\label{subsubsec:rewiring}

The algorithm uses our $\texttt{update}_{\eta}$ procedure, detailed in Algorithm \ref{alg:updateAGM}, which extends the update procedure from \cite{CristiKaraman17_STL_RRTstar} to work with AGM robustness intervals. As shown in Algorithm \ref{alg:updateAGM}, we process each observation in the trajectory incrementally using our $\texttt{IRTM}$ function (Algorithm \ref{alg:EfficientIncrem_IRTM}), which efficiently computes and updates AGM robustness intervals as new observations become available.

The key innovation in our update procedure is the use of AGM robustness intervals as the criterion for adding nodes and rewiring the tree. Nodes are only added when the AGM robustness interval has a positive upper bound, indicating specification satisfaction remains feasible. For rewiring, our algorithm selects connections that improve the lower bound $\underline{\eta}$ of the AGM robustness interval while maintaining formula consistency.

The lower bound optimization strategy provides conservative guarantees during tree construction. Since $\underline{\eta}$ represents the worst-case robustness over all possible trajectory completions (Lemma \ref{lemma:rosiSoundness}), maximizing $\underline{\eta}$ ensures that even the least favorable completion maintains high robustness. This conservative approach prevents the algorithm from committing to partial trajectories that appear promising based on optimistic upper bounds $\overline{\eta}$ but may lead to low-robustness or infeasible completions. As trajectories extend toward the formula horizon $||\phi||$, the interval converges to the exact AGM robustness value (Corollary \ref{corollary:AGM_convgnc}), eliminating conservatism for complete paths while maintaining safety during exploration.

By basing all tree construction decisions on AGM robustness rather than traditional robustness, our approach maintains the algorithmic structure of STL-RRT$^\ast$ while incorporating the benefits of considering all subformulae and time points, not just the critical ones. This results in trajectories that not only satisfy specifications but do so with maximum robustness across all specification aspects.





\setlength{\textfloatsep}{15pt}
\begin{algorithm}[t]\scriptsize
    \bf{Input:} $\boldsymbol{q}_{init}$ -- Initial configuration \\
    \bf{Input:} $\phi$ -- STL formula in positive normal form\label{line:RRTeta_init}\\
    \bf{Output:} \textbf{u} -- a satisfying control policy w.r.t. $\phi$ with maximum AGM robustness\\
    $\mathcal{T} = (V := \emptyset, E = \emptyset)$\\
    $V \gets (v.\boldsymbol{q}_{init}\gets\boldsymbol{q}_{init},\;v. \phi\gets\phi,\;v.t\gets 1,\;v.\texttt{control\_StateTraj}.\mathbf{q}\gets\emptyset,\;v.\texttt{parent}\gets\emptyset,v.\texttt{ch}\gets\emptyset)$\\
    \For{$k = 1 : N^{max}$}
    {\label{line:RRTeta_mainloopStart}
        $t^r, \boldsymbol{q}^r \gets \texttt{sample}(\mathcal{Q}, \mathcal{T}, \phi)$\label{line:RRTeta_sample}\\
        $\mathcal{N} \gets \texttt{near}(\mathcal{T}, \boldsymbol{q}^r, t^r)$\label{line:RRTeta_Near}\\
        $\lambda \gets \texttt{Unif}([0,1])$\label{line:RRTeta_lambda}\\
        $v.\texttt{parent} \gets \emptyset, J^\ast \gets \infty, \boldsymbol{q}^\ast \gets \emptyset$\\
        \ForEach{$v^\prime \in \mathcal{N}$}
        {\label{line:RRTeat_bestParentStart}
            $\Delta t^r = t^r - v^\prime.t$\\
            // Compute DIAS vector using AGM robustness intervals
            $d_{\chi} \gets \chi_\eta(v^\prime.\texttt{control\_stateTraj}.q_{v^\prime.t}, v^\prime.\phi, v^\prime.t)$\\
            // Compute optimal control using DIAS and random sample
            $\boldsymbol{u}^\ast\gets \arg\min\limits_{\boldsymbol{u}\in\mathcal{U}} J_{\chi}(\boldsymbol{u},\texttt{steer}(v.\boldsymbol{q},\boldsymbol{u},\Delta t^r),v.\boldsymbol{q},v.\phi,\boldsymbol{q}^r,\Delta t^r,\lambda)$\label{line:RRTeta_uOptimal}\\
            $\boldsymbol{q}^s \gets \texttt{steer}(v^\prime.\boldsymbol{q},\boldsymbol{u}^\ast,\Delta t^r)$\\
            $J^s \gets J_{\mathbf{q}}(\boldsymbol{u}^\ast, \boldsymbol{q}^s, v^\prime.\boldsymbol{q}, v^\prime.\phi, \boldsymbol{q}^r, d_\eta, \lambda)$\label{line:RRTeta_q_eval}\\
            \If{$J^s < J^\ast \wedge \texttt{steer\_exct}(v^\prime.\boldsymbol{q}, \boldsymbol{q}^s)$}
            {\label{line:RRTeta_assingBestPStart}
                $J^\ast \gets J^s, v.\texttt{parent} \gets v^\prime, \boldsymbol{q}^\ast \gets \boldsymbol{q}^s$
            }\label{line:RRTeta_assingBestPEnd}
        }\label{line:RRTeat_bestParentEnd}
        $v_{\mathrm{temp}}.\boldsymbol{q}\gets\boldsymbol{q}^\ast,\;v_{\mathrm{temp}}.\phi\gets\emptyset,\;v_{\mathrm{temp}}.t\gets\emptyset,\;v_{\mathrm{temp}}.\texttt{control\_StateTraj}\gets\emptyset,\;v_{\mathrm{temp}}.\texttt{parent}\gets\emptyset,\;v_{\mathrm{temp}}.\texttt{ch}\gets\emptyset,\;v_{\mathrm{temp}}.[\eta]\gets[-1,1]$\\
        // Update with AGM robustness interval calculation
        $\texttt{update}_{\eta}(v.\texttt{parent}, v_{\mathrm{temp}})$\label{line:RRTeta_updateBeforeRewirig}\\
        \For{$v^{\prime\prime} \in Near(\mathcal{T}, \boldsymbol{q}^\ast, t^r)$}
        {\label{line:RRTeta_rewiringStart}
            \If{$\texttt{steer\_exct}(\boldsymbol{q}^\ast, \boldsymbol{q}^{\prime\prime})$}
            {
                // Rewiring based on AGM robustness intervals
                $\texttt{update}_{\eta}(v_{\texttt{temp}}, v^{\prime\prime})$\label{line:RRTeta_updateWhenRewiring}
            }
        }\label{line:RRTeta_rewiringEnd}
        \If{$\texttt{existsSolutionAGM}()$}
        {\label{line:RRTeta_findingSolutionStart}
            $v_{\mathrm{best}} = \texttt{bestAGM}(\mathcal{V})$\\
            \Return $v_{\mathrm{best}}.\texttt{control\_StateTraj}$
        }
        \Else{\Return $\emptyset$}\label{line:RRTeta_findingSolutionEnd}
    }\label{line:RRTeta_mainloopEnd}
    \caption{RRT$^{\eta}$ Algorithm}
    \label{alg:RRTeta}
\end{algorithm}
\setlength{\textfloatsep}{14pt}

\begin{algorithm}[t]\scriptsize
    \caption{$\texttt{update}_{\eta}(v_1, v_2)$}
    \label{alg:updateAGM}
    // Initialize AGM robustness interval with parent's interval\\
    $[\eta]_{\mathrm{curr}} \gets v_1.[\eta]$\\
    $t_{\mathrm{curr}} \gets v_1.t$\\
    // Get trajectory points from steering function\\
    $(\boldsymbol{u}_{1,2}\mathbf{q}_{1,2}) \gets \texttt{steer\_exct}(v_1.\boldsymbol{q}, v_2.\boldsymbol{q})$\\
    // Incrementally compute AGM robustness interval for each observation\\
    \For{$i = 1$ to $n$}
    {
        $[\eta]_{\mathrm{curr}} \gets$ \texttt{IRTM}$(v_1.\phi, [\eta]_{\mathrm{curr}}, \mathbf{q}_{1,2}(i), t_{\mathrm{curr}})$\\
        $t_{\mathrm{curr}} \gets t_{\mathrm{curr}} + 1$\\
    }
    $[\underline{\eta}^{\prime}_2, \overline{\eta}^{\prime}_2] \gets [\eta]_{\mathrm{curr}}$\\
    \If{$\phi_2 = \emptyset$}
    {
        \If{$\overline{\eta}^{\prime}_2 \geq 0$}
        {
            $v_2.[\eta] \gets [\underline{\eta}^{\prime}_2, \overline{\eta}^{\prime}_2]$\\
            $v_2.\phi \gets \texttt{simplify}(v_1.\phi)$\\
            $\mathcal{V} \gets \mathcal{V} \cup \{v_2\}$, $\mathcal{E} \gets \mathcal{E} \cup \{(v_1, v_2)\}$
        }
    }
    \ElseIf{$\overline{\eta}^{\prime}_2 \geq 0 \wedge \underline{\eta}^{\prime}_2 \geq \min(v_2.[\eta])\wedge \phi_1 \Rightarrow \phi_2$}
    {
        $v_2.[\eta] \gets [\underline{\eta}^{\prime}_2, \overline{\eta}^{\prime}_2]$\\
        $\mathcal{E} \gets (\mathcal{E} \setminus \{(v_2.\texttt{parent}, v_2)\}) \cup \{(v_1, v_2)\}$\\
        $V_{upd} = ch(v_2)$ // children of $v_2$\\
        \While{$V_{upd} \neq \emptyset$}
        {
            $v \gets V_{upd}.pop()$, $v^{\prime} \gets v.\texttt{parent}$\\
            // Recompute intervals for affected branches recursively\\
            $[\eta]_{\mathrm{curr}} \gets v^{\prime}.[\eta]$\\
            $t_{\mathrm{curr}} \gets v^{\prime}.t$\\
            ($\boldsymbol{u}_{v^{\prime},v},\mathbf{q}_{v^{\prime}.t,v.t}) \gets \texttt{steer\_exct}(v^{\prime}.\boldsymbol{q},v.\boldsymbol{q})$\\
            \For{$i = 1$ to $|\mathbf{q}_{v^{\prime}.t,v.t}|$}
            {
                $[\eta]_{\mathrm{curr}} \gets$ \texttt{IRTM}$(v^{\prime}.\phi, [\eta]_{\mathrm{curr}}, \mathbf{q}_{v^\prime.t,v.t}(i), t_{\mathrm{curr}})$\\
                $t_{\mathrm{curr}} \gets t_{\mathrm{curr}} + \Delta t$\\
            }
            $v.[\eta] \gets [\eta]_{\mathrm{curr}}$
        }
    }
\end{algorithm}

\subsection{Optimization Landscape Advantages}\label{subsec:landscape}
While gradient computation for predicates remains similar between traditional and AGM approaches, a significant advantage emerges during tree rewiring. The cost function used for rewiring in RRT$^\eta$ is based on AGM robustness, which creates a notably smoother optimization landscape compared to traditional robustness. This smoothness arises not from the gradient calculation itself, but from how AGM robustness aggregates satisfaction values across all subformulae and time points.

Traditional robustness, with its $\min/\max$ operators, creates sharp transitions in the optimization landscape whenever the critical subformula changes. In contrast, our AGM approach produces a more continuous landscape through arithmetic and geometric means, making the rewiring process more effective at finding high-quality trajectories. This smoother landscape helps RRT$^\eta$ avoid getting trapped in local optima that might occur with the sharp decision boundaries of traditional robustness.

\subsection{Optimality and Completeness}\label{subsec:ARMORRRTeta_completenss}

The RRT$^\eta$ algorithm combines the scalability of RRT$^\ast$ with the expressiveness of STL specifications using AGM robustness. A key question is whether this integration preserves the core theoretical guarantees of sampling-based planning. We now establish that the algorithm maintains fundamental properties of probabilistic completeness and asymptotic optimality despite the more complex robustness measure.

\begin{theorem}[Probabilistic Completeness]\label{thm:rrteta_completeness}
Consider the RRT$^\eta$ algorithm (Algorithm \ref{alg:RRTeta}) applied to a dynamical system with Lipschitz continuous dynamics (\ref{eq:system}), an STL specification $\phi$, and initial state $\boldsymbol{q}_{\mathrm{init}}$. Let $\mathcal{G}$ denote the set of feasible control-trajectory pairs with positive AGM robustness.

If $\mathcal{G} \neq \emptyset$, then
\begin{equation}
\lim_{n \to \infty} \mathbb{P}(\text{RRT}^\eta \text{ finds } \varphi \in \mathcal{G}) = 1
\end{equation}
where $n$ is the number of iterations.
\end{theorem}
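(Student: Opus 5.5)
The plan is the standard RRT/RRT$^\ast$ completeness template: fix a witness in $\mathcal{G}$, find a robustness margin around it, and show the tree eventually tracks it with probability one.

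\textbf{Step 1: a robust witness.} Fix $\varphi^\circ=(\mathbf{u}^\circ_{0,T-1},\mathbf{q}^\circ_{0,T})\in\mathcal{G}$ with $\eta(\mathbf{q}^\circ,\phi)=\epsilon_0>0$. Predicates are linear and the aggregators $\mathrm{AGM}_\lor,\mathrm{AGM}_\land$ are continuous (their branches agree where the sign pattern changes). So $\eta(\cdot,\phi)$ is continuous in the finitely many states of a length-$T$ trajectory. Hence there is $\delta>0$ such that any trajectory with $\max_t\|\mathbf{q}_t-\mathbf{q}^\circ_t\|<\delta$ has positive AGM robustness.

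\textbf{Step 2: the tube is reachable.} Using the Lipschitz bound (\ref{eq:Lips}) and induction on $t$, a perturbed control sequence gives
\[
\|\mathbf{q}_t-\mathbf{q}^\circ_t\|\le \sum_{k<t}L^{t-k}\bigl(\|\mathbf{q}_k-\mathbf{q}^\circ_k\|+\|\mathbf{u}_k-\mathbf{u}^\circ_k\|\bigr).
\]
From this I choose radii $\rho_t>0$ such that any chain of states $\mathbf{q}_t\in B(\mathbf{q}^\circ_t,\rho_t)$, connected by admissible controls, stays within the $\delta$-tube. I would assume, as in \cite{CristiKaraman17_STL_RRTstar}, that $\mathcal{U}$ has nonempty interior around $\mathbf{u}^\circ$ and that \texttt{steer\_exct} succeeds between such nearby states. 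This assumption should be stated explicitly, since the theorem is silent on it.

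\textbf{Step 3: positive per-iteration progress.} Suppose the tree holds a node $v_t$ with $v_t.\boldsymbol{q}\in B(\mathbf{q}^\circ_t,\rho_t)$ and $v_t.t=t$. I lower-bound by some $p>0$, independent of the iteration, the probability that the next iteration adds a node in $B(\mathbf{q}^\circ_{t+1},\rho_{t+1})$ at time $t+1$. This requires four things:
\begin{itemize}
\item \texttt{sample} puts positive density on $(t+1,B(\mathbf{q}^\circ_{t+1},\rho_{t+1}/2))$; I assume the bias is mixed with a uniform component.
\item $\lambda\sim\mathrm{Unif}([0,1])$ falls near $0$ with positive probability, so $J_\chi$ is dominated by $\|q_s-q^r\|^2$ and steering lands near the sample irrespective of $d_\chi$.
\item $v_t$ lies in $\mathcal{N}$.
\item \texttt{update}$_\eta$ does not reject the node.
\end{itemize}
Acceptance follows from Lemma~\ref{lemma:rosiSoundness}. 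The partial trajectory extends to a completion inside the $\delta$-tube, which has positive robustness, so the upper bound $\overline{\eta}$ of the computed interval is positive and the test $\overline{\eta}\ge 0$ passes. Rewiring only replaces a parent when $\underline{\eta}$ does not decrease. By Theorem~\ref{thm:rosiChainInclusion} and soundness, it never destroys the existence of a node in each ball that still admits a positive-robustness completion.

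\textbf{Step 4: conclude.} Reaching depth $T$ is a sequence of $T$ stages, each succeeding with probability at least $p$ per iteration. The number of iterations needed is stochastically dominated by a sum of $T$ geometric random variables, so it is finite almost surely. Therefore $\mathbb{P}(\text{fail after }n)\le \sum_{t}(1-p)^{n/T}$ up to combinatorial factors, which tends to $0$. At depth $T\ge\|\phi\|$, Corollary~\ref{corollary:AGM_convgnc} says the interval is the singleton equal to the true $\eta>0$. So \texttt{existsSolutionAGM} returns an element of $\mathcal{G}$.

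\textbf{Main obstacle.} I expect Step 3 to be the hardest, specifically guaranteeing that the DIAS-biased steering and the rewiring rule do not remove the tracking node or push $q_s$ out of the ball. I would handle the steering by restricting attention to the event $\lambda<\lambda_0$. I would handle rewiring by arguing that any replacement still lies in the ball (it is the same state $v.\boldsymbol{q}$) and has a no-worse lower bound.
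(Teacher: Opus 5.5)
There is a genuine gap in Step 3. Your invariant says only that some node $v_t$ with $v_t.t=t$ has its \emph{state} in $B(\mathbf{q}^\circ_t,\rho_t)$. But $\eta$ is a functional of the whole root-to-node trajectory, so what must be propagated is a node whose entire history lies in the tube. A node in the ball with a different history need not admit any positive-robustness completion.

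Algorithm~\ref{alg:RRTeta} does not attach $\boldsymbol{q}^\ast$ to $v_t$; it takes the minimizer of $J^s$ over all of $\mathcal{N}$. Even on your event ($\lambda<\lambda_0$, sample in the right ball, $v_t\in\mathcal{N}$), the winner can be another near node whose history leaves the tube. As the tree densifies such competitors accumulate, so an iteration-independent $p$ does not follow.

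Rewiring changes the histories of existing nodes in the same way, and your defense of it does not hold. The interval is only an outer bound, so $\overline{\eta}'\ge 0$ does not certify that any feasible completion of the new history is positive. For partial trajectories $\underline{\eta}$ is typically negative, because unobserved slots are padded with $-1$ in Algorithm~\ref{alg:IRTM_T}, so $\underline{\eta}'\ge\underline{\eta}$ certifies nothing either. Chain inclusion does not help, since it compares prefixes of one trajectory, not two different histories ending at the same state. The witness tail is a positive completion of the old history, not of the new one.

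To close the gap you need an invariant on the state together with the history-dependent information that fixes the sign of $\eta$. For instance, this could be the progressed formula $v.\phi$ plus which windowed obligations are already discharged. You then need an argument, or an added restriction on \texttt{near} and on parent choice, showing that parent selection and rewiring preserve that invariant.

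A smaller error is in Step 1: $\mathrm{AGM}_\land$ and $\mathrm{AGM}_\lor$ are not continuous across sign changes. For example, $\mathrm{AGM}_\land(0.5,\epsilon)\to\sqrt{1.5}-1$ as $\epsilon\downarrow 0$, while $\mathrm{AGM}_\land(0.5,0)=0$. Your conclusion survives for a different reason. $\mathrm{AGM}_\land>0$ iff all arguments are positive, and $\mathrm{AGM}_\lor>0$ iff some argument is positive. Hence $\{\eta(\cdot,\phi)>0\}$ is a finite combination of unions and intersections of the open half-spaces $\{h(\boldsymbol{q}_t)>\varsigma\}$, and is therefore open.

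For comparison, the paper's sketch uses only the two ingredients you also use. Exploration is credited to the sampler plus the randomization in (\ref{eq:choose}) and (\ref{eq:randomization}), and soundness (Lemma~\ref{lemma:rosiSoundness}) rules out premature pruning, exactly as in your acceptance step. The sketch never engages with parent selection or rewiring, so it does not resolve the issue above either. Your exploration route via $\lambda$ near $0$ is the sounder of the two, since randomly reweighting DIAS vectors cannot produce coverage when those vectors vanish.
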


\begin{proof}[Proof Sketch]
The proof relies on two key properties:

\textbf{Stochastic Exploration.} The sampling procedure (Line \ref{line:RRTeta_sample}) combined with randomized composition mechanisms ensures non-zero probability of exploring all regions of the state space. For FPL-based composition, the randomization term $\alpha_i$ in Equation (\ref{eq:randomization}) maintains exploration. For stochastic composition, the Bernoulli choice in Equation (\ref{eq:choose}) ensures no region is permanently excluded.

\textbf{Sound Robustness Bounds.} By Lemma \ref{lemma:rosiSoundness}, the AGM robustness intervals computed by $\texttt{IRTM}$ (Algorithm \ref{alg:EfficientIncrem_IRTM}) are sound: for any completion $\mathbf{s} \in \mathfrak{C}(\mathbf{s}_{t_0,t'})$, we have $\eta(\mathbf{s}, \phi) \in [\eta]_{\mathbf{s}_{t'},\phi}$. This ensures that the algorithm correctly identifies when a partial trajectory can potentially reach positive AGM robustness, preventing premature pruning of feasible paths.

Given sufficient sampling time, the stochastic exploration guarantees that feasible regions are eventually sampled, and sound robustness bounds ensure these samples are correctly identified and added to the tree.
\end{proof}

\begin{theorem}[Asymptotic Optimality]\label{thm:rrteta_optimality}
Under the conditions of Theorem \ref{thm:rrteta_completeness}, let $\varphi^\ast = \argmax_{\varphi \in \mathcal{G}} \eta(\mathbf{q}_{0,T}, \phi)$ denote the optimal solution to Problem \ref{pr:opt_control_problem}. Let $\varphi^n = (\mathbf{u}_{0,T}^n, \mathbf{q}_{0,T}^n)$ denote the control-trajectory pair returned by RRT$^\eta$ after $n$ iterations.

Then, for any $\epsilon > 0$,
\begin{equation}
\lim_{n \to \infty} \mathbb{P}(\eta(\mathbf{q}_{0,T}^n, \phi) \geq \eta(\mathbf{q}_{0,T}^\ast, \phi) - \epsilon) = 1
\end{equation}
\end{theorem}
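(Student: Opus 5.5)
The plan is to follow the standard RRT$^\ast$ asymptotic-optimality template of Karaman and Frazzoli, as adapted in STL-RRT$^\ast$. The idea is to reduce the statement to two facts. First, RRT$^\eta$ almost surely eventually contains trajectories arbitrarily close, in control and state, to a near-optimal solution. Second, the AGM robustness is continuous enough that closeness of trajectories implies closeness of robustness values.

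\textbf{Step 1 (reduction to a robust neighbourhood).} Fix $\epsilon>0$ and the optimal pair $\varphi^\ast=(\mathbf{u}^\ast,\mathbf{q}^\ast)$ of horizon $T$.

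\begin{itemize}
\item On the finite horizon, $\eta(\cdot,\phi)$ is a finite composition of the maps $\frac12(h(\cdot)-\varsigma)$, $[\cdot]_\pm$, arithmetic means and the geometric means in (\ref{eq:AGM_dis})--(\ref{eq:AGM_con}). Each of these is continuous, including across the sign switch, since both branches give $0$ when an argument hits $0$.
\item Hence $\eta(\cdot,\phi)$ is continuous, and uniformly continuous on the compact set of trajectories generated by $\mathcal{U}$ from $\boldsymbol{q}_{\mathrm{init}}$. So there is $\delta>0$ with $\max_t\|\mathbf{q}_t-\mathbf{q}^\ast_t\|<\delta \Rightarrow |\eta(\mathbf{q},\phi)-\eta(\mathbf{q}^\ast,\phi)|<\epsilon$.
\item By the Lipschitz bound (\ref{eq:Lips}) and a discrete Gr\"onwall argument, $\|\mathbf{q}_t-\mathbf{q}^\ast_t\|\le \sum_{k<t}L^{t-k}\|\boldsymbol{u}_k-\boldsymbol{u}^\ast_k\|$. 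Therefore there is $\delta_u>0$ such that any control sequence within $\delta_u$ of $\mathbf{u}^\ast$ at every step is $\epsilon$-optimal.
\item If $\eta(\mathbf{q}^\ast,\phi)>0$, shrink $\delta$ so that these nearby trajectories also lie in $\mathcal{G}$.
\end{itemize}

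\textbf{Step 2 (the tree tracks the optimal trajectory).} Cover $\mathbf{q}^\ast$ by a finite sequence of balls $B_1,\dots,B_T$ of radius $\delta'$ around $\mathbf{q}^\ast_1,\dots,\mathbf{q}^\ast_T$.

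\begin{itemize}
\item Invoke the argument of Theorem \ref{thm:rrteta_completeness}. The sampler places positive probability on each time-state cell $(t,B_t)$.
\item The random convex coefficient $\lambda\sim\texttt{Unif}([0,1])$ gives positive probability to $\lambda$ near $0$, so steering goes essentially toward $\boldsymbol{q}^r$ regardless of the DIAS direction.
\item The randomization ($\alpha_i$ or the Bernoulli choice) ensures the composed DIAS never deterministically forbids a direction.
\item So in each iteration there is probability at least some $p_0>0$ of adding a node in $B_{t+1}$ whose parent is the current node in $B_t$.
\item A Borel--Cantelli / geometric-trials argument then shows that with probability tending to $1$, a chain of nodes through all $B_t$ exists after $n$ iterations.
\end{itemize}

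Soundness (Lemma \ref{lemma:rosiSoundness}) ensures this chain is never pruned. The upper bound $\overline{\eta}$ of every prefix dominates $\eta$ of the near-optimal completion, which is positive, so the test $\overline{\eta}\ge 0$ in Algorithm \ref{alg:updateAGM} passes.

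\textbf{Step 3 (rewiring and the returned solution).}

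\begin{itemize}
\item By Corollary \ref{corollary:AGM_convgnc}, once a node reaches time $\|\phi\|$ its interval collapses to the exact AGM value.
\item $\texttt{bestAGM}$ returns the maximum over complete nodes, so the returned value is at least that of the tracked chain. By Step 1 this is $\ge\eta(\mathbf{q}^\ast,\phi)-\epsilon$.
\item Rewiring on the lower bound $\underline{\eta}$ never decreases the robustness of an existing node. By Theorem \ref{thm:rosiChainInclusion} lower bounds are monotone along prefixes, so intervals recomputed for descendants only tighten. The best value found is therefore nondecreasing in $n$, which converts "eventually exists" into the limit statement.
\end{itemize}

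\textbf{Main obstacle.} I expect the main obstacle to be Step 2. The steering in (\ref{eq:computing_optimal_u}) is only a local optimizer, and exact reachability via $\texttt{steer\_exct}$ is required. I would first add the standard assumption that $\texttt{steer\_exct}$ succeeds between sufficiently close states, i.e.\ local controllability with controls varying continuously in the endpoints. Combined with the Lipschitz bound, this gives the uniform positive per-iteration probability $p_0$. Without this assumption, I would fall back on the $\lambda\approx 0$ branch and an assumption that the $\arg\min$ lands within $\delta_u$ of $\mathbf{u}^\ast_t$ with positive probability under random initialization.
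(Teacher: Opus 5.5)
\textbf{Gap in Step 3: rewiring does not preserve robustness.} Your skeleton is the paper's own: the RRT$^\ast$ template plus Lemma~\ref{lemma:rosiSoundness}, Theorem~\ref{thm:rosiChainInclusion}, Corollary~\ref{corollary:AGM_convgnc} and the stochastic exploration. But the step that must replace the additivity of the RRT$^\ast$ cost does not hold. In Step 3 you claim that rewiring on $\underline{\eta}$ never lowers robustness, and that descendants' intervals ``only tighten'' by Theorem~\ref{thm:rosiChainInclusion}. That theorem compares prefixes of \emph{one fixed} signal. Rewiring instead \emph{replaces} the prefix of $v_2$ and of all its descendants. Algorithm~\ref{alg:updateAGM} tests only $\underline{\eta}'_2 \geq \min(v_2.[\eta])$ for $v_2$ itself, then recomputes the descendants' intervals without any check.

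Because the lower bound is a worst case over all completions, it does not order the robustness of the particular completions present in the tree. Take $\phi=\mathbf{G}_{[0,3]}\mu$. Every prefix over three time points with all predicate values positive has lower bound $\mathrm{AGM}_{\wedge}(r_0,r_1,r_2,-1)=-1/4$. So the test accepts replacing a prefix with values $(0.9,0.9,0.9)$ by one with $(0.9,0.05,0.9)$ ending at the same state. A complete child with value $0.9$ then drops from $0.9$ to $(1.05\cdot 1.9^3)^{1/4}-1\approx 0.64$.

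Two consequences follow. First, the best value found is not monotone in $n$. Second, and worse for Step 2, the chain through $B_1,\dots,B_T$ is not stable: a chain node can later be re-parented onto a prefix far from $\mathbf{q}^\ast$, after which Step 1 certifies nothing about its descendants.

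\textbf{Gap in Step 2: parent selection.} Algorithm~\ref{alg:RRTeta} chooses the parent of a new node by the steering cost $J^s$, not by robustness. Once the near set contains other nodes, nothing forces the chain node in $B_t$ to be picked, so no $p_0$ follows.

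In RRT$^\ast$ both issues are harmless. Choose-parent and rewiring minimize an additive cost-to-come, so whatever parent is chosen is at least as good as the chain node, and improvements propagate to all descendants. Nothing analogous is available for AGM robustness. The paper's sketch asserts the same ``progressive improvement'' by appeal to RRT$^\ast$, so it does not supply this ingredient either. You would need something extra. One option is to archive the best complete trajectory ever found, which makes the returned value monotone by construction. That must be combined with a proof that an $\epsilon$-chain with the correct parents is completed within some bounded window of iterations, with probability bounded away from zero.

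\textbf{Smaller, repairable error in Step 1.} $\eta$ is not continuous across the sign switch, because the geometric branch does not vanish when only one argument reaches $0$. For example, $\mathrm{AGM}_{\wedge}(0.5,r)\to\sqrt{1.5}-1\approx 0.22$ as $r\downarrow 0$, while $\mathrm{AGM}_{\wedge}(0.5,0)=0$; the same happens for $\mathrm{AGM}_{\vee}$.

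You only need the one-sided bound $\eta(\mathbf{q},\phi)>\eta(\mathbf{q}^\ast,\phi)-\epsilon$ near $\mathbf{q}^\ast$, and that does hold. By induction on positive-normal-form formulas, $\eta(\cdot,\phi)$ is lower semicontinuous at every trajectory where it is positive:
\begin{itemize}
\item At a positive $\wedge$ node, all children are positive, so they stay positive nearby and the geometric branch is continuous.
\item At a positive $\vee$ node, some child is positive and stays so nearby, so one remains in the arithmetic branch $\frac{1}{N}\sum_i[\eta_i]_+$. Children that are nonpositive at $\mathbf{q}^\ast$ contribute $0$ there and $\geq 0$ nearby.
\end{itemize}
Since $\varphi^\ast\in\mathcal{G}$ forces $\eta(\mathbf{q}^\ast,\phi)>0$, this together with your Lipschitz estimate gives the required control neighbourhood.
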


\begin{proof}[Proof Sketch]
The proof builds on completeness (Theorem \ref{thm:rrteta_completeness}) and establishes convergence to optimality through:

\textbf{Monotonic Refinement.} By Theorem \ref{thm:rosiChainInclusion}, as partial trajectories extend, AGM robustness intervals satisfy $[\eta]_{\mathbf{s}_{t'},\phi} \subseteq [\eta]_{\mathbf{s}_{t},\phi}$ for $t' > t$. This monotonicity ensures that longer trajectories provide increasingly precise robustness estimates, allowing the algorithm to systematically identify and pursue high-robustness paths through the rewiring mechanism (Lines \ref{line:RRTeta_rewiringStart}--\ref{line:RRTeta_rewiringEnd}).

\textbf{Convergence to Exact Values.} By Corollary \ref{corollary:AGM_convgnc}, when a trajectory reaches $t' \geq ||\phi||$, the interval converges to exact AGM robustness: $[\eta]_{\mathbf{s}_{t'},\phi} = \{\eta(\mathbf{s}, \phi)\}$. This enables accurate comparison and selection of optimal solutions.

\textbf{Rewiring with AGM Cost.} The algorithm's rewiring procedure (Algorithm \ref{alg:updateAGM}) uses AGM robustness as the cost metric, selecting parent connections that maximize $\underline{\eta}$ (the lower bound of the robustness interval) while maintaining formula consistency. Combined with the continuous exploration from stochastic mechanisms, this ensures that as $n \to \infty$, the tree progressively improves toward the optimal AGM robustness value.

The formal proof parallels the RRT$^\ast$ optimality proof \cite{KaramanRRTstarIJRR}, substituting path length cost with AGM robustness maximization and leveraging the properties established in Lemma \ref{lemma:rosiSoundness}, Theorem \ref{thm:rosiChainInclusion}, and Corollary \ref{corollary:AGM_convgnc}.
\end{proof}


  


\section{Case Studies}\label{sec:example}

\subsection{Unicycle Robot}

We evaluate our AGM-based planning approach on a unicycle-drive robot operating 
in a planar environment with sequential visitation requirements and continuous 
avoidance constraints. This benchmark demonstrates the effectiveness of AGM 
robustness on systems with nonholonomic constraints and strict temporal ordering.

\textbf{System Dynamics.} 
Consider a unicycle robot with configuration $\boldsymbol{q} = [x, y, \theta]^\top 
\in \mathbb{R}^2 \times [-\pi, \pi]$, where $(x,y) \in \mathbb{R}^2$ represents 
the robot position and $\theta \in [-\pi, \pi]$ is the heading angle. The robot 
evolves according to discrete-time dynamics with sampling time $\Delta t$:
\begin{equation}\label{eq:unicycle_dynamics}
\begin{aligned}
x_{t+1} &= x_t + v_t \cos(\theta_t) \Delta t \\
y_{t+1} &= y_t + v_t \sin(\theta_t) \Delta t \\
\theta_{t+1} &= \theta_t + \omega_t \Delta t \\
v_{t+1} &= u_1, \quad \omega_{t+1} = u_2
\end{aligned}
\end{equation}
where $v \in [-0.3, 0.3]$ m/s is the translational velocity and $\omega \in [-1, 1]$ 
rad/s is the angular velocity. The augmented state space: $\mathbf{x} = 
(x, y, \theta, v, \omega)^\top \in \mathbb{R}^5$.

The planar workspace $\mathcal{W} = [0, 4] \times [0, 4]$ m$^2$ contains three 
rectangular regions: textbf{Region 1} (initial target): $x \in [2.0, 3.0]$, $y \in [1.0, 2.0]$; \textbf{Region 2} (final target): $x \in [0.5, 1.5]$, $y \in [2.5, 3.0]$; and \textbf{Obstacle region} (forbidden): $x \in [0.5, 1.5]$, $y \in [1.0, 2.0]$.

The target region predicates are formulated as conjunctions of linear inequalities: $\mu_{\text{Region1}} := (2.0 \leq x \leq 3.0) \wedge (1.0 \leq y \leq 2.0$, and $\mu_{\text{Region2}} := (0.5 \leq x \leq 1.5) \wedge (2.5 \leq y \leq 3.0)$. The obstacle avoidance constraint is formulated as: $\mu_{\text{avoid}} := (x < 0.5) \vee (x > 1.5) \vee (y < 1.0) \vee (y > 2.0)$.

The task requires sequential region visitation with temporal ordering and 
continuous obstacle avoidance:
\begin{equation}\label{eq:unicycle_specification}
\begin{aligned}
\phi_{\text{unicycle}} := \mathbf{F}_{[0, 15]}(&\mu_{\text{Region1}})
\wedge \mathbf{F}_{[15, 40]}(\mu_{\text{Region2}}) \\
&\wedge \mathbf{G}_{[0, 20]}(\mu_{\text{avoid}})
\end{aligned}
\end{equation}

The obstacle region is positioned directly between the two target regions, requiring 
the robot to navigate around it. The nonholonomic constraints further complicate 
planning—the robot cannot move sideways and must carefully coordinate heading and 
velocity commands.

AGM Robustness Enables Feasible Solutions. Figure~\ref{fig:unicycle_results} reveals dramatic performance differences between traditional and AGM-based approaches. Traditional STL-RRT* using standard robustness semantics demonstrates catastrophic failure: even after normalization from $[-4.4] \rightarrow [-1.1]$ to account for different robustness scales, the 
lower bound remains persistently negative ($\underline{\eta} \approx -0.35$), indicating the planner cannot discover any trajectory satisfying the specification. The upper bound reaches only $\overline{\eta} \approx 0.05$, and the gap plateaus at $\approx 0.4$, confirming the planner is trapped exploring infeasible state space.

The fundamental limitation of traditional robustness in this scenario stems from its min-max semantics: when evaluating paths through the temporal sequence, traditional robustness takes the minimum over all time steps, causing any momentary low-robustness configuration (e.g., when navigating near the obstacle 
boundary) to dominate the entire trajectory evaluation. This pessimistic semantics prevents the planner from recognizing that brief proximity to constraint boundaries can be acceptable if compensated by high robustness elsewhere.

In stark contrast, both AGM-based heuristics successfully find high-quality solutions. Choose-blend (red) achieves $\underline{\eta} \approx 0.93$ and $\overline{\eta} \approx 0.90$ with gap $\approx 0.1$, while FPL (green) reaches $\underline{\eta} \approx 0.95$ and $\overline{\eta} \approx 0.98$ with gap $< 0.05$. The additive structure of AGM robustness enables both methods to balance constraint satisfaction across the trajectory: states with moderate robustness near the obstacle can be accepted if they enable high-robustness states in the target regions, and this trade-off is quantified explicitly through AGM's aggregation mechanism.

The convergence patterns (panels a-b) demonstrate this fundamental difference: AGM methods show rapid, monotonic improvement, indicating systematic discovery of increasingly robust trajectories. Traditional robustness shows no such improvement pattern, remaining stuck in low-robustness regions throughout planning.

While both AGM methods vastly outperform traditional robustness, FPL demonstrates superior efficiency through gradient-based objective balancing. When evaluating competing subformulae during tree expansion—such as whether to prioritize reaching Region A or Region B during $t \in [2, 7]$—FPL computes fulfillment values $f_i$ from the current AGM robustness intervals (Equation \ref{eq:fulfillment_mapping}). These fulfillment values indicate how well each subformula is currently satisfied. FPL then uses power mean derivatives (Equation \ref{eq:fpl_weights}) to compute weights $w_i$ that naturally prioritize less-fulfilled objectives, guiding exploration toward states that balance all specification requirements rather than optimizing only the most critical constraint.

This gradient-based balancing proves particularly effective for sequential tasks. When expanding the tree near Region A at $t = 5$, the fulfillment-based weights guide sampling toward configurations that not only satisfy the immediate requirement (reaching Region A) but also maintain flexibility for satisfying subsequent requirements (later reaching Region D or E while avoiding obstacles). Choose-blend lacks this principled balancing mechanism and must discover good paths through repeated stochastic sampling and rewiring. The gap metric (panel c) quantifies this difference: FPL reaches gap $< 0.05$ within 400 iterations ($\sim 17$s), while choose-blend achieves gap $\approx 0.1$ at 800 iterations ($\sim 35$s), demonstrating FPL's $2\times$ computational advantage among AGM methods.




Choose-blend lacks this predictive capability and must discover good sequencing through repeated sampling. The gap metric (panel c) quantifies the efficiency difference: FPL reaches gap $< 0.05$ within 400 iterations ($\sim 17$s), while choose-blend achieves gap $\approx 0.1$ at 800 iterations ($\sim 35$s), demonstrating FPL's $2\times$ computational advantage among AGM methods.
The tree visualization (panel d) shows the final solution path in the $(x_1, x_2)$ plane. Both AGM methods discover trajectories that successfully reach Region 1 (gray rectangle, right-center), navigate around the obstacle region (beige rectangle, left-center) via an upper arc, and terminate in Region 2 (orange rectangle, upper-left). The smooth trajectory with color-coded temporal progression demonstrates feasible motion respecting nonholonomic constraints—motion that traditional robustness-based planning fails to discover.

This case study demonstrates three key advantages of AGM robustness. First, AGM methods discover satisfying solutions where traditional robustness fails entirely—the traditional approach remained trapped in infeasible regions with negative lower bounds throughout planning while AGM methods achieved high-quality solutions with $\underline{\eta} \approx 0.93$-$0.95$. Second, both AGM methods converge orders of magnitude faster than traditional approaches, reaching positive robustness within hundreds of iterations compared to the traditional method's persistent failure. Third, among AGM methods, FPL's gradient-based objective balancing provides additional computational advantages, achieving convergence in half the iterations required by choose-blend while maintaining comparable or superior solution quality.

\begin{figure}
    \centering
    \includegraphics[width=.6\columnwidth]{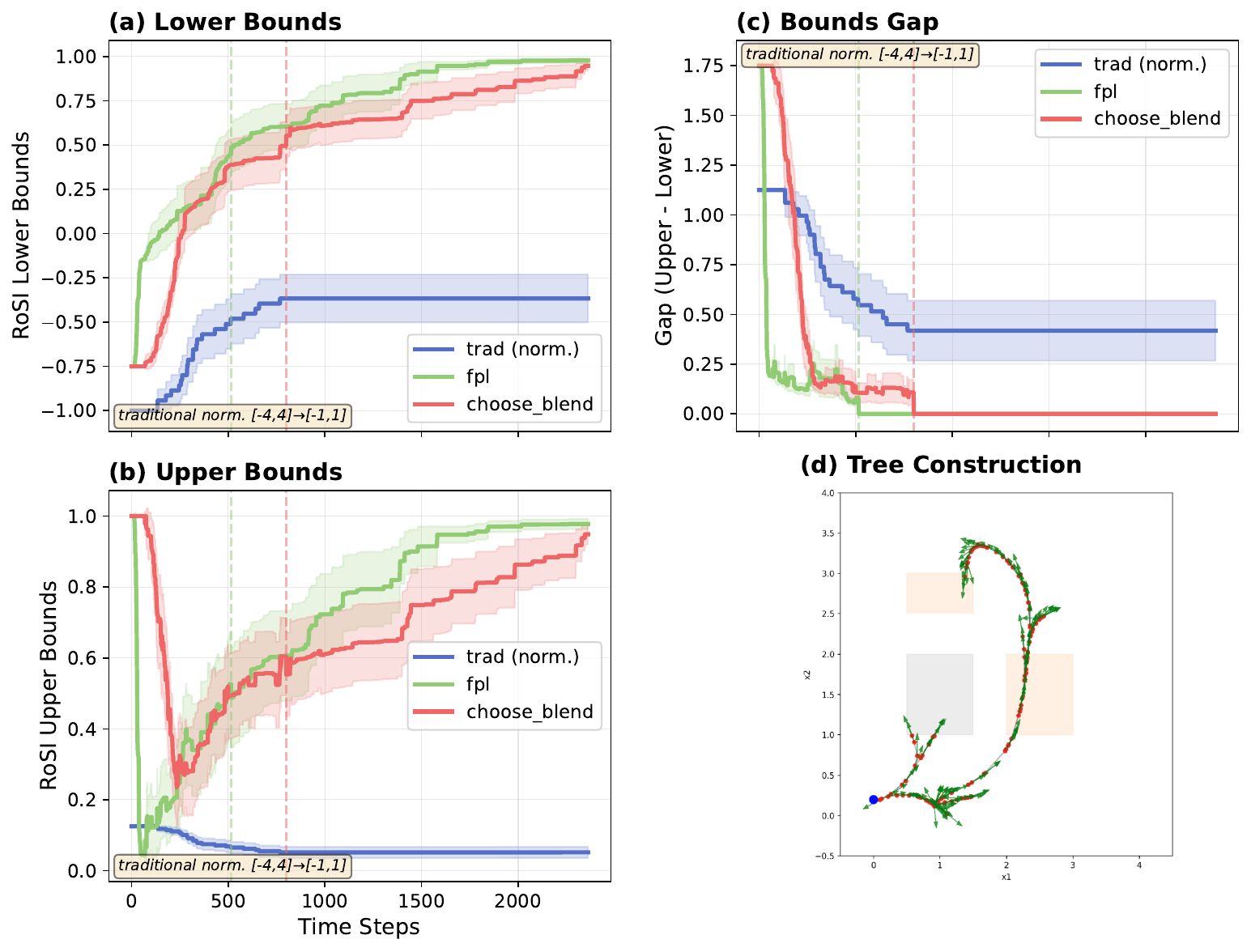}
    \caption{Performance comparison on unicycle robot sequential reach-avoid task, 
    demonstrating critical advantages of AGM robustness. (a-c) Traditional 
    robustness (blue, normalized from $[-4.4] \rightarrow [-1.1]$) completely 
    fails with negative lower bound ($\underline{\eta} \approx -0.35$), near-zero upper 
    bound ($\overline{\eta} \approx 0.05$), and stagnant gap ($\approx 0.4$), unable to 
    discover any satisfying trajectory. Both AGM-based methods succeed: 
    choose-blend (red) achieves $\underline{\eta} \approx 0.93$, $\overline{\eta} \approx 0.90$, 
    gap $\approx 0.1$; FPL (green) reaches $\underline{\eta} \approx 0.95$, $\overline{\eta} \approx 
    0.98$, gap $< 0.05$. The additive structure of AGM robustness enables both 
    methods to balance constraint satisfaction across temporal phases, discovering 
    feasible trajectories that traditional min-max semantics cannot recognize. 
    Among AGM methods, FPL's forward prediction provides $2\times$ computational 
    advantage, reaching convergence at 400 iterations vs. 800 for choose-blend. 
    (d) Tree construction shows solution path (color indicates temporal 
    progression: blue $\rightarrow$ yellow $\rightarrow$ red). The specification 
    requires visiting Region 1 (gray, $[2.0,3.0] \times [1.0,2.0]$m) during 
    $t \in [0,15]$, then Region 2 (orange, $[0.5,1.5] \times [2.5,3.0]$m) during 
    $t \in [15,40]$, while avoiding obstacle region (beige, $[0.5,1.5] \times 
    [1.0,2.0]$m) for $t \in [0,20]$. The successful upper-arc trajectory demonstrates 
    AGM's ability to discover feasible solutions for complex temporal-spatial 
    constraints where traditional approaches fail entirely.}
    \label{fig:unicycle_results}
\end{figure}

\subsection{7DOF Manipulator}

We evaluate our approach on a 7-DOF KUKA iiwa manipulator operating in a 
constrained workspace with multiple target regions and an obstacle. This scenario presents a \textit{cascading choice problem} where the planner must 
make sequential decisions about which regions to visit while satisfying temporal constraints and continuous safety requirements.

\textbf{Problem Setup.} 
The workspace $\mathcal{W} \subset \mathbb{R}^3$ contains four target regions defined as 3D bounding boxes and one spherical obstacle as follows. \textbf{Region A} (yellow): $x \in [0.40, 0.55]$m, $y \in [0.15, 0.30]$m, $z \in [0.50, 0.65]$m; \textbf{Region B} (cyan): $x \in [-0.10, 0.10]$m, $y \in [0.25, 0.40]$m, $z \in [0.55, 0.70]$m; \textbf{Region D} (blue): $x \in [0.45, 0.60]$m, $y \in [-0.10, 0.10]$m, 
    $z \in [0.65, 0.80]$m; \textbf{Region E} (magenta): $x \in [-0.15, 0.15]$m, $y \in [0.35, 0.50]$m, 
    $z \in [0.75, 0.90]$m; and \textbf{Obstacle} (black sphere): center at $(0.20, 0.30, 0.60)$m with radius $r_{\text{obs}} = 0.12$m. 

Each target region is formalized as a workspace predicate:
\begin{equation}
\begin{aligned}
    \mu_i := (x \in [x_i^{\min}, &x_i^{\max}]) \wedge (y \in [y_i^{\min}, y_i^{\max}])\\& 
\wedge (z \in [z_i^{\min}, z_i^{\max}])
\end{aligned}
\end{equation}
for $i \in \{A, B, D, E\}$. The obstacle avoidance constraint ensures safe clearance:
\begin{equation}
\begin{aligned}
    \mu_{\text{obs-free}} := &\sqrt{(x-0.20)^2 + (y-0.30)^2 + (z-0.60)^2} \\ & \geq  r_{\text{obs}} + d_{\text{safe}}
\end{aligned}
\end{equation}
where $d_{\text{safe}} = 0.03$m provides a safety margin, requiring the end-effector 
to maintain at least $0.15$m distance from the obstacle center.

The task requires sequential region visitation with temporal constraints:
\begin{equation}\label{eq:kuka_specification}
\begin{aligned}
\phi_{\text{KUKA}} := &\mathbf{F}_{[2, 7]}(\mu_A \vee \mu_B) 
\wedge \mathbf{F}_{[8,15]}(\mu_D \vee \mu_E) \\
&\wedge \mathbf{G}_{[0, 15]}(\mu_{\text{obs-free}}) 
\wedge \mathbf{G}_{[0, 15]}(\mu_{\text{joint-limits}})
\end{aligned}
\end{equation}
where $\mu_{\text{obs-free}}$ ensures the end-effector maintains safe distance 
$d > d_{\text{safe}} = 0.15$m from the obstacle, and $\mu_{\text{joint-limits}} := 
\bigwedge_{i=1}^{7} (q_i \in [q_i^{\min}, q_i^{\max}])$ enforces joint limits.

This specification creates a complex decision-making scenario with four possible solution paths: A-then-D, A-then-E, B-then-D, or B-then-E. However, not all paths are feasible due to kinematic constraints and the obstacle placement. The planner must choose between visiting Region A ($0.15$m wide box on the right) or Region B ($0.20$m wide box in the center) during the first time window $t \in [2,7]$s, and this choice constrains subsequent options for visiting Region D ($0.15$m wide box on the right) or Region E ($0.30$m wide box in the center) during $t \in [8,15]$s. The obstacle positioned at $(0.20, 0.30, 0.60)$m with $0.12$m radius creates a critical constraint—certain region pairs require trajectories that pass dangerously close to or through the obstacle sphere.

The critical distinction between traditional robustness and our AGM robustness formulation becomes evident in this scenario. Traditional STL-RRT* using standard robustness semantics must explore all four region pairs with roughly equal priority, as the traditional min-max semantics provide limited guidance about which choices lead to higher overall robustness. The traditional method (blue in Figure~\ref{fig:kuka_results}) demonstrates this undirected exploration: after normalization to account for its different robustness scale ($[-4.4] \rightarrow [-1.1]$), the lower bound plateaus at $\underline{\eta} \approx -0.35$ and upper bound at $\overline{\eta} \approx 0.08$, with gap $\approx 1.2$ indicating continued exploration of low-quality solutions.

In contrast, both AGM-based heuristics—choose-blend (red) and FPL (green)—leverage the additive structure of AGM robustness to systematically resolve the choice dilemma. Rather than treating each disjunction as an isolated decision, AGM robustness accumulates contributions from both branches, allowing the planner to recognize that certain region pairs maintain consistently higher robustness throughout the trajectory. Both AGM methods demonstrate structured convergence: choose-blend achieves $\underline{\eta} \approx 0.60$ and $\overline{\eta} \approx 0.80$ with gap $\approx 0.8$, while FPL reaches $\underline{\eta} \approx 0.95$ and $\overline{\eta} \approx 0.98$ with gap $< 0.1$, both substantially outperforming the traditional baseline.

The RoSI bound evolution (panels a-b) reveals the structured nature of AGM-guided exploration: both choose-blend and FPL show monotonic convergence patterns, indicating they systematically improve solution quality by exploiting the additive robustness structure. The traditional method's erratic, stagnant bounds demonstrate its inability to leverage choice structure effectively.

 While both AGM methods significantly outperform the traditional baseline, FPL demonstrates superior performance over choose-blend through its forward predictive capability. When sampling states in Region A during $t \in [2,7]$, FPL evaluates not only the immediate AGM robustness but also predicts the future AGM robustness for reaching Regions D or E during $t \in [8,15]$. This forward-looking strategy enables FPL to prioritize the A-then-E path, which maintains maximum clearance from the obstacle throughout both temporal phases.

Choose-blend, while benefiting from AGM's additive structure, lacks this predictive horizon and must discover good paths through repeated sampling and rewiring. The gap metric (panel c) quantifies this difference: FPL reaches gap $< 0.1$ within 1000 iterations ($\sim 70$s), whereas choose-blend requires $\sim 2500$ iterations ($\sim 100$s) to achieve gap $\approx 0.2$, demonstrating FPL's $1.5\times$ computational advantage among AGM methods.

The augmented state representation $\mathbf{x} = (q_1, \ldots, q_7, x, y, z, \psi_r, \psi_p, \psi_y)^\top \in \mathbb{R}^{13}$ provides crucial computational advantages for this high-dimensional problem. This representation combines joint angles $q \in \mathcal{Q} \subset \mathbb{R}^7$ with end-effector workspace coordinates computed via the forward kinematics (FK) mapping $\mathcal{F}: \mathcal{Q} \to \mathbb{R}^3 \times \text{SO}(3)$, where $\mathcal{F}(q) = (x, y, z, \psi_r, \psi_p, \psi_y)^\top$ represents the end-effector pose.

Rather than sampling blindly in the 7D joint space $\mathcal{Q}$, we employ \textit{task-space sampling with inverse kinematics (IK)}, where IK solves for joint configurations $q = \mathcal{F}^{-1}(w)$ that achieve desired workspace poses $w$. When specification predicates require end-effector positions in workspace regions, we:
\begin{enumerate}
    \item Sample candidate workspace poses $w = (x,y,z,\mathcal{O})$ directly within target region bounds (e.g., uniformly in $[0.40, 0.55] \times [0.15, 0.30] \times [0.50, 0.65]$ for Region A)
    \item Solve IK to compute joint configurations: $q = \mathcal{F}^{-1}(w)$
    \item Construct augmented state $\mathbf{x} = [q, w]$ caching workspace coordinates
\end{enumerate}

This approach offers two significant benefits: (1) sampling is biased toward task-relevant workspace regions rather than the larger joint space, dramatically improving the probability of generating states that satisfy the specification, and (2) forward kinematics computations during robustness evaluation are eliminated since workspace coordinates are pre-computed and cached in $\mathbf{x}$. For each state $\mathbf{x}$, predicate evaluation $\eta(\mathbf{x}_t, \mu_{\text{region}})$ reduces to comparing cached $(x,y,z)$ values against region bounds—simple arithmetic operations rather than expensive trigonometric computations required by $\mathcal{F}(q)$. See Appendix \ref{sec:stl_manipulator} for detailed formalization of the augmented state representation and adaptive sampling strategy (Algorithm \ref{alg:ik_sampling}).

Let $N$ denote the number of tree nodes and $m$ the specification depth (number of temporal operators). Traditional STL-RRT* with naive joint-space sampling performs $O(Nm)$ forward kinematics computations during rewiring, as each potential parent evaluation requires computing $\mathcal{F}(q)$ for robustness checking. Our IK-based sampling with caching reduces this to $O(N)$ forward kinematics computations (once per state creation) plus $O(Nm)$ cached lookups. For the KUKA iiwa, forward kinematics computation via Denavit-Hartenberg transformations takes approximately $50\mu$s while cached lookups require $< 1\mu$s, yielding a $50\times$ speedup in predicate evaluation. Combined with FPL's faster convergence (requiring $\sim 60\%$ fewer iterations to achieve gap $< 0.1$ compared to traditional methods), the overall computational improvement is substantial—planning times are reduced from $\sim 180$s to $\sim 70$s for convergence to high-quality solutions.

Figure~\ref{fig:kuka_results}(d-f) visualizes the final reference trajectory from multiple viewpoints using ghost trail epresentation. The robot successfully visits Region A (yellow, $0.15$m $\times$ $0.15$m $\times$ $0.15$m box on the right) during the first temporal window and terminates at Region E (magenta, $0.30$m $\times$ $0.15$m $\times$ $0.15$m box in the center-top) in the second window, maintaining safe clearance from the obstacle sphere throughout execution. The smooth progression of overlaid poses demonstrates feasible motion respecting joint velocity constraints, with the final fully-opaque pose confirming specification satisfaction at $t=15$s.

\begin{figure}
    \centering
    \includegraphics[width=0.7\columnwidth]{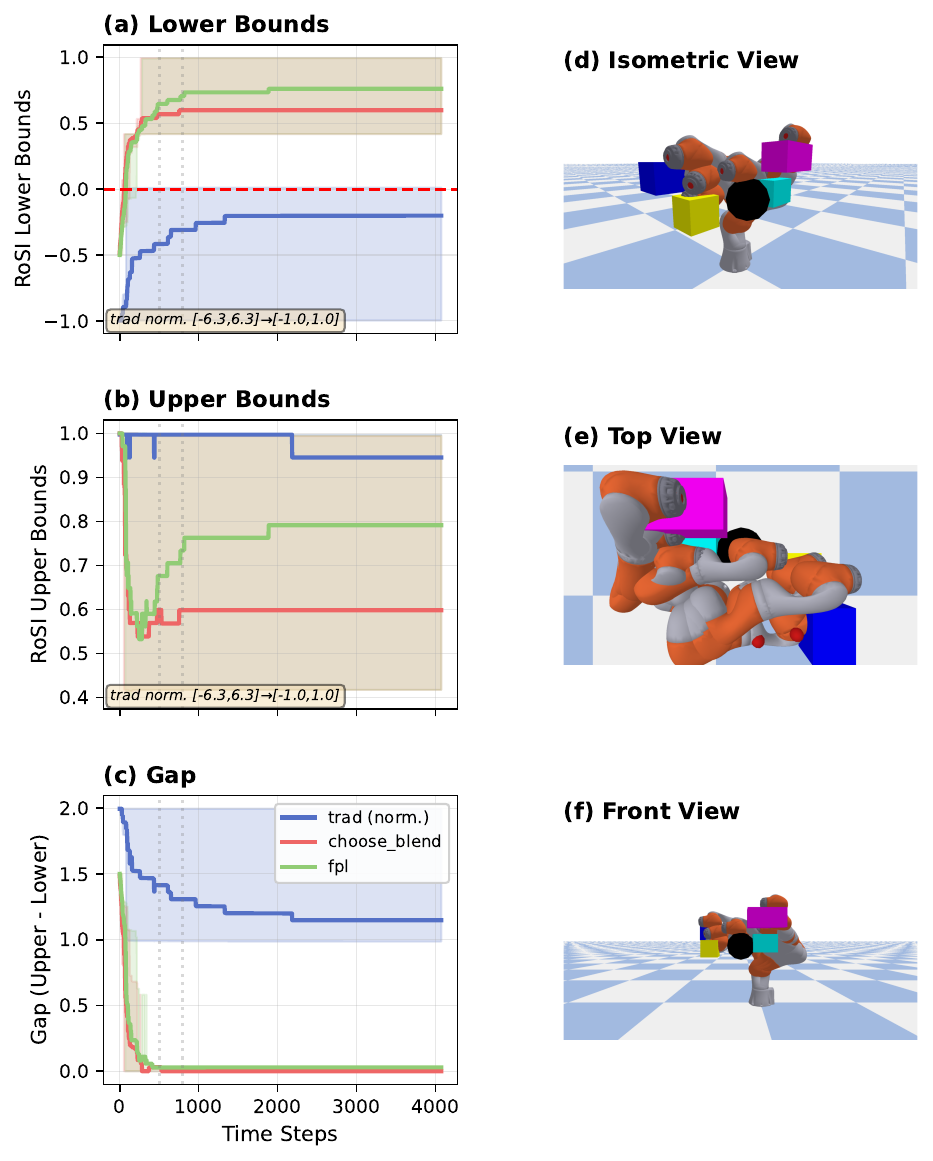}
    \caption{Performance comparison of STL-RRT* heuristics on the KUKA iiwa 
    cascading choice problem. (a-b) Lower and upper RoSI bounds evolution 
    over planning iterations: FPL (green) demonstrates rapid, structured 
    convergence toward tight bounds, while traditional (blue) and choose-blend 
    (red) methods show slower, less directed exploration. The monotonic 
    convergence of FPL bounds indicates its ability to systematically resolve 
    the choice dilemma by prioritizing high-robustness region pairs. (c) Gap 
    metric quantifies the exploration efficiency—FPL reaches near-zero gap 
    ($< 0.1$) within 1000 iterations, whereas traditional and choose-blend 
    plateau at 1.2 and 0.8 respectively, demonstrating continued exploration 
    of suboptimal paths. The specification $\phi_{\text{KUKA}}$ requires visiting Region A (yellow) or B (cyan) during $t \in [2,7]$s, then Region D (blue) or E (magenta) during $t \in [8,15]$s, while continuously 
    avoiding the obstacle (black sphere) and respecting joint limits. 
    (d-f) Ghost trail visualization from multiple viewpoints shows the reference 
    trajectory with 10 time-sampled configurations (opacity indicates temporal progression: transparent $\rightarrow$ opaque). The robot successfully executes the A-then-E path, with red spheres marking joint positions throughout motion. The structured, collision-free trajectory demonstrates successful resolution of the cascading choice problem. }
    \label{fig:kuka_results}
\end{figure}

\section{Conclusion and Future Work}

We presented RRT$^\eta$, a sampling-based motion planning framework integrating Arithmetic-Geometric Mean (AGM) robustness for Signal Temporal Logic specifications. Our approach addresses fundamental limitations of traditional min-max robustness through: (1) AGM robustness interval semantics with efficient incremental monitoring, (2) Direction of Increasing AGM Satisfaction vectors leveraging Fulfillment Priority Logic for principled multi-objective composition, and (3) formal guarantees of probabilistic completeness and asymptotic optimality.

Experimental validation on unicycle and 7-DOF manipulator systems demonstrated substantial advantages over traditional approaches. On sequential reach-avoid tasks, traditional robustness failed to discover satisfying trajectories (negative lower bounds throughout planning) while AGM-based methods achieved high-quality solutions ($\underline{\eta} \approx 0.93$-$0.95$). FPL-based composition demonstrated $1.5$-$2\times$ computational advantages over stochastic methods through forward prediction. For high-dimensional systems, augmented state representation with IK-based sampling achieved $185\times$ speedup for tight workspace constraints.

Future work includes extensions to probabilistic specifications, integration of learning-based heuristics while preserving theoretical guarantees, and handling of dynamic obstacles with time-varying workspace constraints. The holistic robustness evaluation and principled multi-objective composition demonstrated in RRT$^\eta$ advances formal methods-based robot control for complex temporal specifications.

\bibliographystyle{IEEEtran}
\bibliography{IEEEabrv,iros_refs}

@misc{ahmad2024APPO,
      title={Accelerating Proximal Policy Optimization Learning Using Task Prediction for Solving Games with Delayed Rewards}, 
      author={Ahmad Ahmad and Mehdi Kermanshah and Kevin Leahy and Zachary Serlin and Ho Chit Siu and Makai Mann and Cristian-Ioan Vasile and Roberto Tron and Calin Belta},
      year={2024},
      eprint={2411.17861},
      archivePrefix={arXiv},
      primaryClass={cs.LG},
      url={https://arxiv.org/abs/2411.17861}, 
}

@article{yang2023lqrcbfrrtstar,
  title={LQR-CBF-RRT*: Safe and optimal motion planning},
  author={Yang, Guang and Cai, Mingyu and Ahmad, Ahmad and Prorok, Amanda and Tron, Roberto and Belta, Calin},
  journal={arXiv preprint arXiv:2304.00790},
  year={2023}
}

@INPROCEEDINGS{ahmad2023TWTLrobustness,
  author={Ahmad, Ahmad and Vasile, Cristian-Ioan and Tron, Roberto and Belta, Calin},
  booktitle={2023 62nd IEEE Conference on Decision and Control (CDC)}, 
  title={Robustness Measures and Monitors for Time Window Temporal Logic}, 
  year={2023},
  volume={},
  number={},
  pages={6841-6846},
  doi={10.1109/CDC49753.2023.10383712}}

@incollection{Cristi2020_TWTLrrt,
  title={Language-guided sampling-based planning using temporal relaxation},
  author={Penedo, Francisco and Vasile, Cristian-Ioan and Belta, Calin},
  booktitle={Algorithmic Foundations of Robotics XII},
  pages={128--143},
  year={2020},
  publisher={Springer}
}

@article{sadana2023survey,
  title={A Survey of Contextual Optimization Methods for Decision Making under Uncertainty},
  author={Sadana, Utsav and Chenreddy, Abhilash and Delage, Erick and Forel, Alexandre and Frejinger, Emma and Vidal, Thibaut},
  journal={arXiv preprint arXiv:2306.10374},
  year={2023}
}

@book{belta2017formal,
  title={Formal methods for discrete-time dynamical systems},
  author={Belta, Calin and Yordanov, Boyan and Gol, Ebru Aydin},
  volume={15},
  year={2017},
  publisher={Springer}
}

@book{baier2008principles,
  title={Principles of model checking},
  author={Baier, Christel and Katoen, Joost-Pieter},
  year={2008},
  publisher={MIT press}
}

@inproceedings{cristi2019AGMstl,
  title={Arithmetic-geometric mean robustness for control from signal temporal logic specifications},
  author={Mehdipour, Noushin and Vasile, Cristian-Ioan and Belta, Calin},
  booktitle={2019 American Control Conference (ACC)},
  pages={1690--1695},
  year={2019},
  organization={IEEE}
}

@inproceedings{donze2010robustSTL,
  title={Robust satisfaction of temporal logic over real-valued signals},
  author={Donz{\'e}, Alexandre and Maler, Oded},
  booktitle={Formal Modeling and Analysis of Timed Systems: 8th International Conference, FORMATS 2010, Klosterneuburg, Austria, September 8-10, 2010. Proceedings 8},
  pages={92--106},
  year={2010},
  organization={Springer}
}

@inproceedings{maler2004STLpaper,
  title={Monitoring temporal properties of continuous signals},
  author={Maler, Oded and Nickovic, Dejan},
  booktitle={Formal Techniques, Modelling and Analysis of Timed and Fault-Tolerant Systems: Joint International Conferences on Formal Modeling and Analysis of Timed Systmes, FORMATS 2004, and Formal Techniques in Real-Time and Fault-Tolerant Systems, FTRTFT 2004, Grenoble, France, September 22-24, 2004. Proceedings},
  pages={152--166},
  year={2004},
  organization={Springer}
}

@INPROCEEDINGS{Sadra_RobustSTL_MPC,
  author={Sadraddini, Sadra and Belta, Calin},
  booktitle={2015 53rd Annual Allerton Conference on Communication, Control, and Computing (Allerton)},
  title={Robust temporal logic model predictive control},
  year={2015},
  volume={},
  number={},
  pages={772-779},
  doi={10.1109/ALLERTON.2015.7447084}}

@article{Seshia17_STLmtrng_robustSatInterv,
  title={Robust online monitoring of signal temporal logic},
  author={Deshmukh, Jyotirmoy V and Donz{\'e}, Alexandre and Ghosh, Shromona and Jin, Xiaoqing and Juniwal, Garvit and Seshia, Sanjit A},
  journal={Formal Methods in System Design},
  volume={51},
  number={1},
  pages={5--30},
  year={2017},
  publisher={Springer}
}

@article{Cristi2013RRG,
  author    = {Cristian Ioan Vasile and
               Calin Belta},
  title     = {Sampling-Based Temporal Logic Path Planning},
  journal   = {CoRR},
  volume    = {abs/1307.7263},
  year      = {2013},
  url       = {http://arxiv.org/abs/1307.7263},
  eprinttype = {arXiv},
  eprint    = {1307.7263},
  bibsource = {dblp computer science bibliography, https://dblp.org}
}

@INPROCEEDINGS{CristiKaraman17_STL_RRTstar,
  author={Vasile, Cristian-Ioan and Raman, Vasumathi and Karaman, Sertac},
  booktitle={2017 IEEE/RSJ International Conference on Intelligent Robots and Systems (IROS)},
  title={Sampling-based synthesis of maximally-satisfying controllers for temporal logic specifications},
  year={2017},
  volume={},
  number={},
  pages={3840-3847},
  doi={10.1109/IROS.2017.8206235}}

@article{Kobilarov2012CERRTstar,
author = {Kobilarov, Marin},
doi = {10.1177/0278364912444543},
issn = {02783649},
journal = {International Journal of Robotics Research},
number = {7},
pages = {855--871},
title = {{Cross-entropy motion planning}},
volume = {31},
year = {2012}
}

@article{Wu2020e,
author = {Wu, Albert and Sadraddini, Sadra and Tedrake, Russ},
doi = {10.1109/ICRA40945.2020.9196802},
journal = {IEEE International Conference on Robotics and Automation (ICRA)},
pages = {4245--4251},
title = {{R3T: Rapidly-exploring Random Reachable Set Tree for Optimal Kinodynamic Planning of Nonlinear Hybrid Systems}},
url = {https://doi.org/10.3182/20070822-3-ZA-2920.00076},
year = {2020}
}

@article{KaramanRRTstarIJRR,
author = {Sertac Karaman and Emilio Frazzoli},
title ={Sampling-based algorithms for optimal motion planning},
journal = {The International Journal of Robotics Research},
volume = {30},
number = {7},
pages = {846-894},
year = {2011},
doi = {10.1177/0278364911406761},

URL = {
        https://doi.org/10.1177/0278364911406761

},
eprint = {
        https://doi.org/10.1177/0278364911406761

}
}

@article{Cristi2020ijrrRRGLTL,
  title={Reactive sampling-based path planning with temporal logic specifications},
  author={Vasile, Cristian Ioan and Li, Xiao and Belta, Calin},
  journal={The International Journal of Robotics Research},
  volume={39},
  number={8},
  pages={1002--1028},
  year={2020},
  publisher={SAGE Publications Sage UK: London, England}
}

@book{Siciliano2008,
author = {Siciliano, Bruno},
booktitle = {Journal of Chemical Information and Modeling},
edition = {1st},
isbn = {1846286417},
publisher = {Springer Publishing Company, Incorporated},
title = {{Robotics: Modelling, Planning and Control}},
year = {2008}
}

@article{otte2016rrtx,
  title={RRTX: Asymptotically optimal single-query sampling-based motion planning with quick replanning},
  author={Otte, Michael and Frazzoli, Emilio},
  journal={The International Journal of Robotics Research},
  volume={35},
  number={7},
  pages={797--822},
  year={2016},
  publisher={SAGE Publications Sage UK: London, England}
}

@article{mabsout2025FPL,
  title={Closing the Intent-to-Reality Gap via Fulfillment Priority Logic},
  author={Mabsout, Bassel El and AbdelGawad, Abdelrahman and Mancuso, Renato},
  journal={arXiv preprint arXiv:2503.05818},
  year={2025}
}

@article{Cristi2017TWTL,
  title={Time window temporal logic},
  author={Vasile, Cristian-Ioan and Aksaray, Derya and Belta, Calin},
  journal={Theoretical Computer Science},
  volume={691},
  pages={27--54},
  year={2017},
  publisher={Elsevier}
}

@article{hadas2018synthesis_review,
  title={Synthesis for robots: Guarantees and feedback for robot behavior},
  author={Kress-Gazit, Hadas and Lahijanian, Morteza and Raman, Vasumathi},
  journal={Annual Review of Control, Robotics, and Autonomous Systems},
  volume={1},
  pages={211--236},
  year={2018},
  publisher={Annual Reviews}
}

@INPROCEEDINGS{VRaman_MPC_STL,
  author={Raman, Vasumathi and Donzé, Alexandre and Maasoumy, Mehdi and Murray, Richard M. and Sangiovanni-Vincentelli, Alberto and Seshia, Sanjit A.},
  booktitle={53rd IEEE Conference on Decision and Control},
  title={Model predictive control with signal temporal logic specifications},
  year={2014},
  volume={},
  number={},
  pages={81-87},
  doi={10.1109/CDC.2014.7039363}}

@inproceedings{lavalle1998rapidly,
author = {LaValle, Steven M},
booktitle = {Ames, IA, USA},
title = {{Rapidly-exploring random trees: A new tool for path planning}},
year = {1998}
}

@INPROCEEDINGS{cbfrrt_app_Fainekos_ppr,
  author={Majd, Keyvan and Yaghoubi, Shakiba and Yamaguchi, Tomoya and Hoxha, Bardh and Prokhorov, Danil and Fainekos, Georgios},
  booktitle={2021 IEEE/RSJ International Conference on Intelligent Robots and Systems (IROS)}, 
  title={Safe Navigation in Human Occupied Environments Using Sampling and Control Barrier Functions}, 
  year={2021},
  volume={},
  number={},
  pages={5794-5800},
  doi={10.1109/IROS51168.2021.9636406}}

@INPROCEEDINGS{ahmadcbfrrt*,
  author={Ahmad, Ahmad and Belta, Calin and Tron, Roberto},
  booktitle={2022 IEEE 61st Conference on Decision and Control (CDC)}, 
  title={Adaptive Sampling-based Motion Planning with Control Barrier Functions}, 
  year={2022},
  volume={},
  number={},
  pages={4513-4518},
  doi={10.1109/CDC51059.2022.9993278}}

\appendix

\subsection{Correctness of Incremental Modification Functions}\label{apndx:mdf_correctness}

We provide the detailed proof of Lemma \ref{lemma:mdf_correctness}, establishing that incremental modification functions produce identical results to full AGM robustness recomputation.

\begin{proof}
We prove correctness for disjunction; conjunction follows by symmetry. Let $\{r_1, \ldots, r_{N-1}\}$ be the values used to compute $\eta = \mathrm{AGM}_{\lor}(r_1, \ldots, r_{N-1})$, and let $\eta'$ be a new observation.

\textbf{Case 1: } $\eta < 0 \land \eta' < 0$

Since all values $r_1, \ldots, r_{N-1}$ must be negative for $\eta < 0$, by equation (\ref{eq:AGM_dis}) we have:
\begin{equation}
\eta = -\sqrt[N-1]{\prod_{i=1}^{N-1}(1-r_i)} + 1
\end{equation}

Rearranging:
\begin{equation}
(1-\eta)^{N-1} = \prod_{i=1}^{N-1}(1-r_i)
\end{equation}

Now computing the full AGM with all $N$ values:
\begin{equation}
\begin{aligned}
\mathrm{AGM}_{\lor}&(r_1, \ldots, r_{N-1}, \eta') = -\sqrt[N]{\prod_{i=1}^{N}(1-r_i)} + 1\\
&= -\sqrt[N]{\prod_{i=1}^{N-1}(1-r_i) \cdot (1-\eta')} + 1\\&= -\sqrt[N]{(1-\eta)^{N-1}(1-\eta')} + 1
\end{aligned}
\end{equation}

This exactly matches the first case of equation (\ref{eq:mod_agmor}):
\begin{equation}
\texttt{mdf\_AGM}_{\lor}(\eta, N, \eta') = -\sqrt[N]{(1-\eta)^{N-1}\cdot(1-\eta')}+1
\end{equation}

\textbf{Case 2: } $\eta < 0 \land \eta' > 0$

Since $\eta < 0$, all previous values $r_i < 0$, thus $[r_i]_+ = 0$ for all $i \in \{1, \ldots, N-1\}$. The full AGM computation switches to arithmetic mean:
\begin{equation}
\begin{aligned}
&\mathrm{AGM}_{\lor}(r_1, \ldots, r_{N-1}, \eta') = \frac{1}{N}\sum_{i=1}^{N}[r_i]_+\\
&= \frac{1}{N}\left(\sum_{i=1}^{N-1}[r_i]_+ + [\eta']_+\right)= \frac{1}{N}(0 + [\eta']_+)= \frac{[\eta']_+}{N}
\end{aligned}
\end{equation}

This matches the second case of equation (\ref{eq:mod_agmor}).

\textbf{Case 3: Otherwise}

In this case, at least one value among $r_1, \ldots, r_{N-1}$ was positive, so the AGM used arithmetic mean of positive parts:
\begin{equation}
\eta = \frac{1}{N-1}\sum_{i=1}^{N-1}[r_i]_+
\end{equation}

Therefore:
\begin{equation}
\sum_{i=1}^{N-1}[r_i]_+ = (N-1)\eta
\end{equation}

Computing the full AGM with the new value:
\begin{equation}
\begin{aligned}
\mathrm{AGM}_{\lor}(r_1, \ldots, r_{N-1}, \eta') &= \frac{1}{N}\sum_{i=1}^{N}[r_i]_+\\
&= \frac{1}{N}\left(\sum_{i=1}^{N-1}[r_i]_+ + [\eta']_+\right)\\
&= \frac{(N-1)\eta + [\eta']_+}{N}
\end{aligned}
\end{equation}

However, equation (\ref{eq:mod_agmor}) shows:
\begin{equation}
\texttt{mdf\_AGM}_{\lor}(\eta, N, \eta') = \frac{(N\cdot\eta-[\eta]_{+})+[\eta']_+}{N}
\end{equation}

To reconcile these expressions, observe that:
\begin{equation}
\frac{(N-1)\eta + [\eta']_+}{N} = \frac{N \cdot \eta - \eta + [\eta']_+}{N}
\end{equation}

If $\eta > 0$, then $[\eta]_+ = \eta$, yielding:
\begin{equation}
\frac{N \cdot \eta - [\eta]_+ + [\eta']_+}{N} = \frac{(N-1)\eta + [\eta']_+}{N}
\end{equation}

If $\eta \leq 0$, then $[\eta]_+ = 0$, and the expressions are trivially equal. Thus the third case is verified.
\end{proof}
 
 \subsection{AGM Robustness Interval Soundness}\label{apndx:ProofofROSIsoundness}

\begin{proof}
We prove the soundness by structural induction over formula $\phi$.

\textbf{Base Cases:}

\textbf{Predicate $\mu$:} For a predicate, if the signal is complete at time $t$, then $[\eta](\mathbf{s}_t, \mu) = \{\eta(\mathbf{s}_t, \mu)\}$ and $[\eta](\mathbf{s}_t, \mu) = \{\eta(\mathbf{s}_t, \mu)\}$, so the inclusions hold trivially.

\textbf{Inductive Cases:}

\textbf{Conjunction ($\phi_1 \wedge \phi_2$):} We have $[\eta](\mathbf{s}_{t_1,t'}, \phi_1 \wedge \phi_2) = \min([\eta](\mathbf{s}_{t_1,t'}, \phi_1), [\eta](\mathbf{s}_{t_1,t'}, \phi_2))$ and $[\eta](\mathbf{s}_{t_1,t'}, \phi_1 \wedge \phi_2) = \mathrm{AGM}_{\wedge}([\eta](\mathbf{s}_{t_1,t'}, \phi_1), [\eta](\mathbf{s}_{t_1,t'}, \phi_2))$.

By the induction hypothesis, for any completion $\mathbf{s}_{t_1,t_2} \in \mathcal{C}$, we have $\eta(\mathbf{s}_{t_1,t_2}, \phi_1) \in [\eta](\mathbf{s}_{t_1,t'}, \phi_1)$ and $\eta(\mathbf{s}_{t_1,t_2}, \phi_2) \in [\eta](\mathbf{s}_{t_1,t'}, \phi_2)$. Since $\eta(\mathbf{s}_{t_1,t_2}, \phi_1 \wedge \phi_2) = \min\{\eta(\mathbf{s}_{t_1,t_2}, \phi_1), \eta(\mathbf{s}_{t_1,t_2}, \phi_2)\}$, by the properties of interval arithmetic on minimum operations, we have $\eta(\mathbf{s}_{t_1,t_2}, \phi_1 \wedge \phi_2) \in [\eta](\mathbf{s}_{t_1,t'}, \phi_1 \wedge \phi_2)$.

Similarly, by the induction hypothesis, $\eta(\mathbf{s}_{t_1,t_2}, \phi_1) \in [\eta](\mathbf{s}_{t_1,t'}, \phi_1)$ and $\eta(\mathbf{s}_{t_1,t_2}, \phi_2) \in [\eta](\mathbf{s}_{t_1,t'}, \phi_2)$. Since $\eta(\mathbf{s}_{t_1,t_2}, \phi_1 \wedge \phi_2) = \mathrm{AGM}_{\wedge}(\eta(\mathbf{s}_{t_1,t_2}, \phi_1), \eta(\mathbf{s}_{t_1,t_2}, \phi_2))$, by the properties of interval arithmetic on AGM operations, we have $\eta(\mathbf{s}_{t_1,t_2}, \phi_1 \wedge \phi_2) \in [\eta](\mathbf{s}_{t_1,t'}, \phi_1 \wedge \phi_2)$.

\textbf{Disjunction ($\phi_1 \vee \phi_2$):} The proof follows similarly to the conjunction case, using maximum operations for standard robustness and $\mathrm{AGM}_{\vee}$ for AGM robustness, along with the corresponding interval arithmetic properties.

\textbf{Negation ($\neg\phi$):} We have $[\eta](\mathbf{s}_{t_1,t'}, \neg\phi) = -[\eta](\mathbf{s}_{t_1,t'}, \phi)$ and $[\eta](\mathbf{s}_{t_1,t'}, \neg\phi) = -[\eta](\mathbf{s}_{t_1,t'}, \phi)$. By the induction hypothesis, $\eta(\mathbf{s}_{t_1,t_2}, \phi) \in [\eta](\mathbf{s}_{t_1,t'}, \phi)$ and $\eta(\mathbf{s}_{t_1,t_2}, \phi) \in [\eta](\mathbf{s}_{t_1,t'}, \phi)$. Since $\eta(\mathbf{s}_{t_1,t_2}, \neg\phi) = -\eta(\mathbf{s}_{t_1,t_2}, \phi)$ and $\eta(\mathbf{s}_{t_1,t_2}, \neg\phi) = -\eta(\mathbf{s}_{t_1,t_2}, \phi)$, the results follow by the properties of interval arithmetic under negation.

\textbf{Globally ($\mathbf{G}_{[a,b]}\phi$):} We prove this case by contradiction. Consider partial signal $\mathbf{s}_{t_1,t'}$ and assume that for some completion $\mathbf{s}_{t_1,t_2} \in \mathcal{C}$, we have $\eta(\mathbf{s}_{t_1,t_2}, \mathbf{G}_{[a,b]}\phi) \notin [\eta](\mathbf{s}_{t_1,t'}, \mathbf{G}_{[a,b]}\phi)$.

\textbf{Case 1:} If $t' - t_1 \geq b$, then by definition of the interval semantics, $[\eta](\mathbf{s}_{t_1,t'}, \mathbf{G}_{[a,b]}\phi) = \{\eta(\mathbf{s}_{t_1,t'}, \mathbf{G}_{[a,b]}\phi)\}$, which means the interval contains only the true robustness value. This contradicts our assumption.

\textbf{Case 2:} If $t' - t_1 < b$, then by definition:
\begin{align}
[\eta](\mathbf{s}_{t_1,t'}, \mathbf{G}_{[a,b]}\phi) = \left[\eta_{\bot}, \min_{t \in [t_1+a, t_1+b] \cap [t_1, t']} \eta(\mathbf{s}_{t_1,t}, \phi)\right]
\end{align}

By definition of standard robustness, $\eta(\mathbf{s}_{t_1,t_2}, \mathbf{G}_{[a,b]}\phi) = \min_{t \in [t_1+a, t_1+b]} \eta(\mathbf{s}_{t_1,t}, \phi)$. Since the partial signal agrees with the completion on the observed portion, we have $\eta(\mathbf{s}_{t_1,t_2}, \mathbf{G}_{[a,b]}\phi) \geq \eta_{\bot}$ and $\eta(\mathbf{s}_{t_1,t_2}, \mathbf{G}_{[a,b]}\phi) \leq \min_{t \in [t_1+a, t_1+b] \cap [t_1, t']} \eta(\mathbf{s}_{t_1,t}, \phi)$, which contradicts our assumption.

The proof for the AGM robustness interval follows similarly, considering the specific definitions of the AGM interval semantics for the globally operator:
\begin{align}
[\eta](\mathbf{s}_{t_1,t'}, \mathbf{G}_{[a,b]}\phi) = \begin{cases}
\{\eta(\mathbf{s}_{t_1,t'}, \mathbf{G}_{[a,b]}\phi)\} & \text{if } t' - t_1 \geq b \\
[\underline{\eta}, \overline{\eta}] & \text{otherwise}
\end{cases}
\end{align}

where $\underline{\eta}$ and $\overline{\eta}$ are computed based on the observed partial signal and the possible range of future values, ensuring that any completion's AGM robustness falls within this interval.

\textbf{Eventually ($\mathbf{F}_{[a,b]}\phi$):} The proof follows similarly to the globally case, with appropriate modifications for the maximum operation in standard robustness and $\mathrm{AGM}_{\vee}$ operation in AGM robustness.

For the eventually operator, the interval semantics is defined as:
\begin{equation}
\begin{aligned}
\relax[\eta](\mathbf{s}_{t_1,t'}, \mathbf{F}_{[a,b]}\phi) = \begin{cases}
\{\eta(\mathbf{s}_{t_1,t'}, \mathbf{F}_{[a,b]}\phi)\};\qquad \text{if } t' - t_1 \geq b \\
[\max_{t \in [t_1+a, t_1+b] \cap [t_1, t']} \eta(\mathbf{s}_{t_1,t}, \phi),  \eta_{\top}];  \\\text{otherwise}
\end{cases}
\end{aligned}  
\end{equation}

The soundness follows by similar contradiction arguments, noting that the true robustness value for any completion must lie within the computed interval bounds.
\end{proof}

\subsection{AGM Robustness Interval Chain Inclusion}\label{apndx:ChainInclusion}

\begin{proof}
The set inclusion property follows from the fact that as more of the signal becomes observed, the set of possible completions becomes more constrained, leading to tighter interval bounds.

\textbf{Monotonicity:} For any formula $\phi$, as the partial signal grows from $\mathbf{s}_{t_1,t'_1}$ to $\mathbf{s}_{t_1,t'_2}$, the interval bounds can only become tighter or remain the same, never become looser. This is because additional observed values either:
\begin{enumerate}
\item Provide exact values for subformulae that were previously estimated with intervals, or
\item Constrain the possible range of future values based on the observed trend.
\end{enumerate}

\textbf{Convergence:} When $t' \geq ||\phi||$, the entire time horizon required to evaluate formula $\phi$ has been observed. At this point:
\begin{itemize}
\item All temporal operators can be evaluated exactly using the observed signal values
\item No uncertainty remains about future signal values within the formula's time horizon
\item The interval semantics reduces to the exact robustness computation
\end{itemize}

Therefore, $[\eta](\mathbf{s}_{t_1,t'}, \phi) = \{\eta(\mathbf{s}_{t_1,t_2}, \phi)\}$ and $[\eta](\mathbf{s}_{t_1,t'}, \phi) = \{\eta(\mathbf{s}_{t_1,t_2}, \phi)\}$ when $t' \geq ||\phi||$.

The proof can be formalized by structural induction over $\phi$, showing that each operator's interval semantics satisfies the monotonicity and convergence properties.
\end{proof}

\subsection{STL Specifications for Robotic Manipulators}\label{sec:stl_manipulator}
This appendix provides detailed formalization of STL-based planning for robotic manipulators, including: (1) the augmented state representation combining joint and workspace coordinates, (2) the adaptive sampling strategy with inverse kinematics and caching (Algorithm \ref{alg:ik_sampling}), and (3) computational complexity analysis demonstrating the efficiency gains of this approach. These technical details support the implementation described in Section \ref{sec:example} for the KUKA iiwa case study.

\subsubsection{State Space Representation}
We consider a robotic manipulator with $n$ joints and configuration space $\mathcal{Q} \subset \mathbb{R}^n$, where each joint configuration $q = (q_1, \ldots, q_n)^\top$ is subject to joint limits $q_i \in [q_i^{\min}, q_i^{\max}]$. For our KUKA iiwa manipulator, $n=7$. 

The robot's end-effector pose lies in the Special Euclidean group $\text{SE}(3)$, which represents all possible positions and orientations in 3D space. The workspace $\mathcal{W} \subseteq \text{SE}(3)$ is the set of end-effector poses achievable by the robot, constrained by its kinematic structure and joint limits.

For computational purposes, we parametrize each pose in SE(3) using position $p = (x, y, z)^\top \in \mathbb{R}^3$ and orientation represented by Euler angles $\mathcal{O} = (\psi_r, \psi_p, \psi_y)^\top \in [-\pi, \pi]^3$. This provides a local 6-dimensional parametrization of the workspace:
\begin{equation}
\mathcal{W} \subseteq \mathbb{R}^3 \times \text{SO}(3)
\end{equation}

The relationship between joint space and workspace is given by the forward kinematics mapping:
\begin{equation}\label{eq:forward_kinematics}
\mathcal{F}: \mathcal{Q} \to \mathcal{W} \subseteq \text{SE}(3), \quad \mathcal{F}(q) = (p, \mathcal{O})
\end{equation}
where $\mathcal{F}$ is typically a nonlinear function determined by the robot's kinematic structure through Denavit-Hartenberg parameters or similar representations\cite{Siciliano2008}.

\subsubsection{Augmented State Representation}
To facilitate STL specification in task-relevant workspace coordinates while maintaining computational efficiency, we employ an \textit{augmented state representation}:
\begin{equation}\label{eq:augmented_state}
\mathbf{x} = (q_1, \ldots, q_n, x, y, z, \psi_r, \psi_p, \psi_y)^\top \in \mathbb{R}^{n+6}
\end{equation}
where the first $n$ components are the joint angles, and the remaining $6$ components represent the end-effector pose computed via forward kinematics. This augmented representation has dimension $d = n + 6$ (for our 7-DOF manipulator, $d = 13$).

The augmented state space $\mathcal{X} = \mathcal{Q} \times \mathcal{W}$ has bounds:
\begin{equation*}
\begin{aligned}
        \mathcal{X} = \prod_{i=1}^{n}[q_i^{\min}, q_i^{\max}] \times [x^{\min}, x^{\max}] \times [y^{\min}, y^{\max}] \times \\ [z^{\min}, z^{\max}] \times [-\pi, \pi]^3
\end{aligned}
\end{equation*}

The augmented state is maintained consistently by updating the workspace coordinates whenever joint angles change:
\begin{equation}\label{eq:state_update}
\mathbf{x}_{t+1} = (q_{t+1}, \mathcal{F}(q_{t+1}))
\end{equation}

This representation provides a crucial computational advantage: workspace coordinates are computed once during state creation and cached in the state vector, eliminating redundant forward kinematics computations during robustness evaluation.

\subsubsection{STL Specification in Workspace Coordinates}
Task-level specifications for robotic manipulators are naturally expressed in workspace coordinates. An STL formula $\phi$ for a manipulator typically consists of:

\textbf{Workspace Predicates.} The most common predicates specify regions in the workspace:
\begin{equation}\label{eq:workspace_predicate}
\mu_{\text{region}} := (x - x_c)^2 + (y - y_c)^2 + (z - z_c)^2 \leq r^2
\end{equation}
or more generally, bounding box constraints:
\begin{equation*}
\begin{aligned}
\mu_{\text{box}} := (x &\in [x_{\min}, x_{\max}]) \wedge (y \in [y_{\min}, y_{\max}]) \wedge \\&(z \in [z_{\min}, z_{\max}])    
\end{aligned}
\end{equation*}

\textbf{Orientation Constraints.} For tasks requiring specific end-effector orientations:
\begin{equation*}
\mu_{\text{orient}} := |\psi_r - \psi_{r_d}| \leq \delta_r \wedge |\psi_p - \psi_{p_d}| \leq \delta_p
\end{equation*}

\textbf{Joint Space Constraints.} Safety constraints are often expressed in joint space:
\begin{equation*}
\mu_{\text{joint}} := q_i \in [q_i^{\text{safe,min}}, q_i^{\text{safe,max}}]
\end{equation*}

\textbf{Example Specification.} A pick-and-place task can be formalized as:
\begin{equation}\label{eq:pick_place_stl}
\begin{aligned}
\phi_{\text{pick-place}} := &\mathbf{F}_{[t_1, t_2]}(\mu_{\text{pick}} \wedge \mu_{\text{grasp-orient}}) \\
&\wedge \mathbf{F}_{[t_3, t_4]}(\mu_{\text{place}} \wedge \mu_{\text{release-orient}}) \\
&\wedge \mathbf{G}_{[0, T]}(\mu_{\text{collision-free}} \wedge \mu_{\text{joint-limits}})
\end{aligned}
\end{equation}
where $\mu_{\text{pick}}$ and $\mu_{\text{place}}$ are workspace regions, $\mu_{\text{grasp-orient}}$ and $\mu_{\text{release-orient}}$ are orientation constraints, and $\mu_{\text{collision-free}}$ ensures obstacle avoidance throughout execution.



\subsubsection{Robustness Evaluation on Augmented States}
The AGM robustness of an STL formula $\phi$ is evaluated on a signal $\mathbf{x}_{0:T} = (\mathbf{x}_0, \mathbf{x}_1, \ldots, \mathbf{x}_T)$ in the augmented state space. For a predicate $\mu$ involving workspace variables, the evaluation $\eta(\mathbf{x}_t, \mu)$ directly accesses the relevant components of the augmented state $\mathbf{x}_t$.

\textbf{Workspace Predicate Example.} For a spherical region centered at $(x_c, y_c, z_c)$ with radius $r$:
\begin{equation}
h(\mathbf{x}_t) = r^2 - [(x_t - x_c)^2 + (y_t - y_c)^2 + (z_t - z_c)^2]
\end{equation}
and the robustness is:
\begin{equation}
\eta(\mathbf{x}_t, \mu_{\text{region}}) = \frac{1}{2}(h(\mathbf{x}_t) - 0)
\end{equation}

\textbf{Hybrid Specifications.} The augmented representation allows seamless combination of joint-space and workspace predicates:
\begin{equation}
\phi_{\text{hybrid}} := \mathbf{F}_{[5,15]}\mu_{\text{workspace-target}} \wedge \mathbf{G}_{[0,20]}\mu_{\text{safe-joints}}
\end{equation}
where the robustness evaluation accesses joint components for $\mu_{\text{safe-joints}}$ and workspace components for $\mu_{\text{workspace-target}}$ from the same augmented state $\mathbf{x}_t$.

\subsubsection{Adaptive Sampling with Inverse Kinematics}
While the augmented state representation enables efficient robustness evaluation, sampling from regions that satisfy workspace predicates requires special consideration. We employ an \textit{adaptive sampling strategy} that leverages the geometric simplicity of workspace specifications.

\textbf{Active Predicate Identification.} Given a sampled time $t$ and STL formula $\phi$, we identify the set of predicates $\mathcal{P}_{\text{active}}(t, \phi)$ that constrain the state at time $t$. These predicates are categorized as either workspace predicates $\mathcal{P}_W$ or configuration space predicates $\mathcal{P}_Q$.

\textbf{Configuration Space Sampling.} When only configuration space predicates are active, we sample directly from the joint space:
\begin{equation}
q \sim \text{Uniform}(\mathcal{Q}) \cap \bigcap_{\mu \in \mathcal{P}_Q} \{q : \mu(q) \text{ true}\}
\end{equation}

\textbf{Inverse Kinematics-Based Sampling.} When workspace predicates are active, rejection sampling in configuration space becomes prohibitively inefficient due to the complex geometry of the inverse image $\mathcal{F}^{-1}(\mathcal{W}_{\text{pred}})$. Instead, we sample directly from the workspace region and solve the inverse kinematics problem:

\begin{algorithm}[t]\scriptsize
    \KwIn{$\phi$: STL formula, $t$: sampled time, $\mathcal{C}$: IK cache}
    \KwOut{$\mathbf{x}_{\text{new}} = [q, \mathcal{F}(q)] \in \mathbb{R}^{n+6}$: augmented state}
    \DontPrintSemicolon
    \BlankLine
    
    \tcp{Identify active predicates}
    $\mathcal{P}_W, \mathcal{P}_Q \gets \texttt{GetActivePredicates}(\phi, t)$\;
    
    \BlankLine
    \eIf{$\mathcal{P}_W = \emptyset$}{
        \tcp{Configuration space sampling}
        $q \gets \texttt{SampleConfig}(\mathcal{Q}, \mathcal{P}_Q)$\;
        $w \gets \mathcal{F}(q)$ \Comment{Forward kinematics}\label{line:fk}
    }{
        \tcp{IK-based workspace sampling}
        $w \gets \texttt{SampleWorkspace}(\mathcal{P}_W)$ \Comment{Direct geometric sampling}\label{line:sample_ws}\;
        
        \eIf{$w \in \mathcal{C}$}{
            $q \gets \mathcal{C}[w]$ \Comment{Cache hit}\label{line:cache}
        }{
            $q \gets \texttt{SolveIK}(w)$ \Comment{Numerical IK solver}\label{line:ik}\;
            \If{$q = \texttt{NULL}$ \textbf{or} $q \notin \mathcal{P}_Q$}{
                \Return \texttt{FAILURE} \Comment{Retry sampling}
            }
            $\mathcal{C}[w] \gets q$ \Comment{Cache solution}\;
        }
    }
    
    \BlankLine
    $\mathbf{x}_{\text{new}} \gets [q, w]$ \Comment{Construct augmented state}\;
    \Return{$\mathbf{x}_{\text{new}}$}
    
    \caption{$\texttt{AdaptiveSample}(\phi, t, \mathcal{C})$}
    \label{alg:ik_sampling}
\end{algorithm}

\subsubsection{Computational Complexity Analysis}

We analyze the computational complexity of the adaptive sampling procedure in Algorithm~\ref{alg:ik_sampling} and compare it with standard rejection sampling in configuration space.

\subsubsection{Per-Sample Complexity}

When only configuration predicates are active ($\mathcal{P}_W = \emptyset$), sampling requires $O(n)$ time to generate random joint angles and compute forward kinematics (Line~\ref{line:fk}). For serial manipulators, forward kinematics has complexity $O(n)$.

When workspace predicates are active ($\mathcal{P}_W \neq \emptyset$), the procedure performs workspace geometric sampling in $O(1)$ time for primitive shapes such as spheres or boxes (Line~\ref{line:sample_ws}). Cache lookup requires $O(1)$ time with hash table implementation (Line~\ref{line:cache}). On cache miss, numerical IK solving has complexity $O(n^3)$ for gradient-based optimization methods (Line~\ref{line:ik}). 

Let $\eta$ denote the cache hit rate. After cache warm-up, empirical results show $\eta \approx 0.9$. The expected cost per IK-based sample is:
\begin{equation*}
\mathbb{E}[\text{Cost}_{\text{IK-sample}}] = O((1-\eta)n^3)
\end{equation*}
For $\eta = 0.9$, this reduces to approximately $O(0.1n^3)$ per sample.

\subsubsection{Comparison with Rejection Sampling}

Consider a tight workspace constraint defining a spherical region of radius $r = 0.1$m within a workspace of volume $V_W \approx 4$m$^3$. The constraint volume is $V_{\text{const}} = \frac{4}{3}\pi r^3 \approx 0.0042$m$^3$, yielding an acceptance probability $p_{\text{acc}} \approx V_{\text{const}}/V_W \approx 0.001$.

Rejection sampling in configuration space requires an expected $1/p_{\text{acc}} \approx 1000$ attempts per successful sample. Each attempt costs $O(n)$ for forward kinematics computation, giving a total expected cost of $O(1000n)$ per successful sample.

IK-based sampling achieves an IK success rate of approximately $p_{\text{IK}} \approx 0.9$ for well-designed workspace constraints. This requires an expected $1/p_{\text{IK}} \approx 1.1$ attempts per successful sample. With caching at rate $\eta = 0.9$, the cost per attempt is $O(0.1n^3)$, yielding a total expected cost of $O(0.11n^3)$ per successful sample.

For the KUKA iiwa with $n=7$ joints, the speedup factor is:
\begin{equation*}
\text{Speedup} = \frac{1000n}{0.11n^3} = \frac{1000}{0.11n^2} \approx \frac{1000}{5.4} \approx 185
\end{equation*}
The IK-based approach achieves approximately 185 times speedup for tight workspace constraints despite the higher per-call cost of IK compared to FK.

\subsubsection{Overall Planning Complexity}

During typical planning scenarios, workspace predicates are active for approximately $\alpha \approx 0.4$ of samples based on empirical observations. For a planning run with $N_{\text{samples}}$ samples, the total sampling cost is:
\begin{equation*}
\text{Cost}_{\text{sample}} = N_{\text{samples}} \left[(1-\alpha) \cdot O(n) + \alpha \cdot O((1-\eta)n^3)\right]
\end{equation*}
Substituting $\alpha = 0.4$, $\eta = 0.9$, and $n = 7$ yields approximately $N_{\text{samples}} \cdot O(18)$.

The augmented state representation provides additional computational savings during robustness evaluation. Without state augmentation, evaluating a trajectory of length $T$ with $P$ workspace predicates requires $O(TPn)$ forward kinematics calls per evaluation. The augmented state caches these results during trajectory generation, requiring only $O(Tn)$ forward kinematics calls total regardless of the number of evaluations.

For $M$ robustness evaluations during planning:
\begin{align*}
\text{Cost}_{\text{eval, standard}} &= M \cdot T \cdot P \cdot O(n) \\
\text{Cost}_{\text{eval, augmented}} &= T \cdot O(n)
\end{align*}
This provides a speedup factor of $MP$. For typical values $M = 10{,}000$ and $P = 7$, the augmented state achieves a 70,000 times reduction in evaluation cost.

\subsubsection{Space Complexity}

The IK cache stores discretized workspace poses mapped to configuration solutions. Using discretization resolution $\Delta x = \Delta y = \Delta z = 0.01$m and $\Delta_{\text{orient}} = 5^\circ$ over an active workspace volume of approximately $0.5$m$^3$, the cache contains at most $|\mathcal{C}| \lesssim 10^6$ entries. Each entry stores 7 joint angles as double precision values requiring 56 bytes, yielding total cache memory of approximately 56 MB.

The augmented state increases per-node memory from $n$ to $n+6$ dimensions. For the 7-DOF KUKA manipulator, this represents a factor of $13/7 \approx 1.86$ increase in memory per node. This modest overhead is acceptable given the substantial computational savings in both sampling and evaluation.

\end{document}